\def\eqref#1{equation~\ref{#1}}
\def\1{\bm{1}}
\DeclareMathAlphabet{\mathsfit}{\encodingdefault}{\sfdefault}{m}{sl}
\SetMathAlphabet{\mathsfit}{bold}{\encodingdefault}{\sfdefault}{bx}{n}
\newcommand{\Kset}{\mathcal{K}}
\newcommand{\Fset}{\mathcal{F}}
\newcommand{\duoE}{\mathbb{E}}
\newcommand{\duoP}{\mathbb{P}}
\newcommand{\Real}{\mathbb{R}}
\begin{document}
\title{Adversarial bandit optimization for approximately linear functions}
%
%

\author{Zhuoyu Cheng \Envelope\inst{1}\orcidID{0009-0002-6631-4929} \and
Kohei Hatano\inst{2,3}\orcidID{0000-0002-1536-1269} \and
Eiji Takimoto\inst{2}\orcidID{0000-0001-9542-2553}}
\authorrunning{Z. Cheng et al.}
%
\institute{Joint Graduate School of Mathematics for Innovation, Kyushu University, Japan
\and
Department of Informatics, Kyushu University, Japan \and
RIKEN AIP, Japan \\
\noindent\Envelope\email{cheng.zhuoyu.874@s.kyushu-u.ac.jp} \\
\email{\{hatano,eiji\}@inf.kyushu-u.ac.jp}}
\maketitle              
\begin{abstract}
We consider a bandit optimization problem for non-convex and non-smooth functions, where in each trial the loss function is the sum of a linear function and a small but arbitrary perturbation chosen after observing the player’s choice. We give both expected and high probability regret bounds for the problem. Our result also implies an improved high-probability regret bound for the bandit linear optimization, a special case with no perturbation. We also give a lower bound on the expected regret.

\keywords{Bandit optimization \and Normal barrier.}
\end{abstract}
\section{Introduction}
Bandit optimization is a sequential game between a player and an adversary. 
The game is played over $T$ rounds, where $T$ is a positive natural number called the horizon. 
The game is specified by a pair $(\mathcal{K}, \mathcal{F})$, 
where $\mathcal{K}\subseteq\Real^{d}$ is a bounded closed convex set and 
$\mathcal{F}\subseteq\{ f: \mathcal{K}\to\Real\}$ is a function class.
In each round $t\in[T]$, the player first chooses an action $x_{t}\in\mathcal{K}$ and 
the adversary chooses a loss function $f_{t}\in\mathcal{F}$,
and then the player receives the value $f_{t}(x_{t})$ as the loss.
Note that $f_{t}$ itself is unknown to the player.
In this paper, we assume the adversary is oblivious, i.e., the loss functions are specified before starting the game
\footnote{We do not consider the case where the adversary is adaptive, i.e., 
it can choose the $t$-th loss function $f_t$ depending on the previous actions $x_1,\dots,x_{t-1}$.}.
The goal of the player is to minimize the regret 
\begin{align}
\label{regret}
\sum_{t=1}^{T}f_{t}(x_{t})-\min_{x\in\mathcal{K}}\sum_{t=1}^{T}f_{t}(x) 
\end{align}
in expectation (expected regret) or with high probability (high-probability regret).

For convex loss functions,
the bandit optimization has been extensively studied (see, e.g.,\cite{abernethy2012interior,dani2007price,lee2020bias}). 
 $\mathcal{O}(d^{1/3}T^{3/4})$ regret bounds are shown by Flaxman et al.~\cite{flaxman-etal:soda05}.
Lattimore~\cite{lattimore:msl20} shows an information-theoretic regret bound $\widetilde{O}(d^{2.5}\sqrt{T})$ 
for convex loss functions.
For linear loss functions, Abernethy et al.~\cite{abernethy2012interior} propose the SCRiBLe algorithm  and
give an expected regret bound $\mathcal{O}(d\sqrt{T\ln T})$, 
achieving optimal dependence on $T$~\cite{bubeck2012towards}.
Lee et al.~\cite{lee2020bias} propose SCRiBLe with lifting and increasing learning rates and show a high-probability regret bound
$\widetilde{O}(d^2\sqrt{T})$.

Recently, 
non-convex functions are also getting popular in this literature.
For example, Agarwal et al.~\cite{agarwal2019learning} show a regret bound $\mathcal{O}(poly(d)T^{2/3})$ 
for smooth and bounded non-convex functions.
Ghai et al.~\cite{ghai2022non} propose algorithms with regret bounds $\mathcal{O}(poly(d)T^{2/3})$ 
under the assumption that non-convex functions are reparameterized as some convex functions.

In this paper, we investigate the bandit optimization problem for a class of non-convex and non-smooth loss functions. 
The function class consists of non-smooth and non-convex functions that are "close" to linear functions, 
in the sense that 
functions in the class can be viewed as 
linear functions with adversarial non-convex perturbations whose amount is up to $\epsilon$. Bandit optimization for linear loss functions with stochastic noise (e.g., \cite{abbasi2011improved,amani2019linear}) 
cannot be applied to our problem.
We propose a novel approach to analyzing high-probability regret, introducing a new method for decomposing regret.

\begin{enumerate}
    \item When $\epsilon \neq 0$, we propose a modification of SCRiBLe with lifting and increasing learning rates~\cite{lee2020bias} and leverage the properties of the $\nu$-normal barrier~\cite{nemirovski2004interior} to prove its high probability regret bound $\widetilde{O}(d\sqrt{T}+\sqrt{\epsilon }dT+\epsilon dT)$. We also derive its expected regret $\widetilde{O}(d\sqrt{T}+\sqrt{\epsilon }dT+\epsilon dT)$.
    \item When $\epsilon=0$, this problem becomes bandit linear optimization, a special case with no perturbation. Compared to the result of~\cite{lee2020bias}, holding with probability $1-\gamma$, $\mathcal{O}(\ln^{2}(dT)d^{2}\ln T \sqrt{T\ln \frac{\ln (dT)}{\gamma}})$, we use a different regret decomposition approach to achieve a better high-probability regret bound $\mathcal{O}(d\sqrt{T\ln T}+\ln T\sqrt{T\ln(\frac{\ln T}{\gamma})}+\ln(\frac{\ln T}{\gamma}))$.
    \item  We prove a lower bound $\Omega(\epsilon T)$, characterizing the minimal dependence on the parameter $\epsilon$.
    
\end{enumerate}


\section{Related Work}
Bandit linear optimization was first proposed by  Awerbuch \& Kleinberg~\cite{awerbuch2004adaptive}, 
who achieved a regret bound of $\mathcal{O}(d^{3/5}T^{2/3})$ against oblivious adversaries.
Later, McMahan \& Blum~\cite{mcmahan2004online} established a regret bound of $\mathcal{O}(dT^{3/4})$ when facing adaptive adversaries. 
A foundational approach in bandit optimization problems involves gradient-based smoothing techniques. 
Flaxman et al.~\cite{flaxman-etal:soda05} proposed a method for constructing an unbiased estimator of the loss function's gradient in the bandit setting.
Abernethy et al.~\cite{abernethy2012interior} introduced the SCRiBLe algorithm and achieved an expected regret bound of $\mathcal{O}(d\sqrt{T\ln T})$ against oblivious adversaries.
Bartlett et al.~\cite{bartlett2008high} proposed a high-probability regret bound of $\mathcal{O}(d^{2/3}\sqrt{T\ln dT})$ under a special condition. Subsequently, Lee et al.~\cite{lee2020bias} presented a high-probability regret bound $\widetilde{O}(d^2\sqrt{T})$ for both oblivious and adaptive adversaries. 
In recent work, 
Ito and Takemura~\cite{ito2023best,ito2023exploration} proposed a  bandit linear algorithm that adapts to both stochastic and adversarial environments, achieving a regret bound of 
$\mathcal{O}(d\sqrt{T\ln T})$ in the adversarial setting.
Rodemann et al.~\cite{rodemann2024reciprocal} established a connection between bandit problems and Bayesian black-box optimization, offering theoretical foundations for regret bounds across both domains.

Unlike bandit convex optimization problems, which have been extensively explored and analyzed, bandit non-convex optimization problems introduce unique challenges due to the complexity of exploring and exploiting in a non-convex area. Gao
et al.~\cite{gao2018online} considered both non-convex losses and non-stationary data and established a regret bound of $O(\sqrt{T+poly(T)}$. Yang et al.~\cite{yang2018optimal} achieved a regret bound of $O(\sqrt{T\ln T})$ for non-convex loss functions. However, they both require the loss functions to have smoothness properties, and our loss functions are neither convex nor smooth.




\subsection{Comparison to  Lee et al.~\cite{lee2020bias}}

Our approach builds upon  Lee et al.'s work. Below, we highlight the key differences between our method and  Lee et al.'s in the context of the oblivious bandit setting:

\begin{enumerate}
\item Simplified Regret Analysis:
While Lee et al.'s regret analysis adds unnecessary complexity in the oblivious bandit setting, our approach offers a simplified analysis with clearer and more interpretable results.

\item Reduced Dependence on $d$:
 Lee et al.'s analysis results in a regret bound with greater dependence on $d$, whereas our method derives a bound with significantly reduced dependence on $d$ (This distinction is demonstrated in the introduction and further illustrated in the case where $\epsilon = 0$).

\item Revised Generality of Problem Setting: 
Like the SCRiBLe algorithm~\cite{abernethy2012interior}, our approach is more general, treating bandit linear optimization as a special case within a broader problem framework.

\end{enumerate}



\section{Preliminaries}
 This section introduces some necessary notations and defines an $\epsilon$-approximately linear function. Then we give our problem setting.

\subsection{Notation}
We abbreviate the $2$-norm $\| \cdot \|_2$ as $\|\cdot\|$. 
For a twice differentiable convex function $\mathcal{R}: \Real^d \to \Real$ and 
any $x, h\in \Real^d$, let  
$\| h \|_{x}=
\| h \|_{\nabla^{2}\mathcal{R}(x)}=\sqrt{h^{\top}\nabla^{2}\mathcal{R}(x)h}$, 
and 
$\| h \|_{x}^{*}=
\| h \|_{(\nabla^{2}\mathcal{R}(x))^{-1}}=\sqrt{h^{\top}(\nabla^{2}\mathcal{R}(x))^{-1}h}$, 
respectively.

For any $v\in \Real^d$, 
let $v^{\perp}$ be the space orthogonal to $v$.
Let $\mathbb{S}^d_{1}=\{x\mid\| x \| = 1\}$.
The vector $e_{i} \in \Real^d$ is a standard basis vector with a value of $1$ in the 
$i$-th position and $0$ in all other positions.
$I$ is an identity matrix with dimensionality implied by context.

\subsection{Problem Setting}
Let $\mathcal{K}\subseteq\Real^{d}$ be a bounded and closed ball of radius $D$ centered at the zero vector.
Furthermore, we assume that $\Kset$ contains the unit ball. 
Otherwise, 
we can apply an affine transformation to translate the center point of the convex set to the origin. 
Let $\mathcal{K'}=\{(x,1):x\in\mathcal{K}\}$.
For any $\delta \in (0,1)$, let 
$\mathcal{K}_{\delta}=\{x|\frac{1}{1-\delta} x \in \mathcal{K}\}$ and 
$\mathcal{K'_{\delta}}=\{(x,1):x\in\mathcal{K_\delta}\}$, respectively.

\begin{definition} 
A function $f: \mathcal{K} \to \Real$ is 
$\epsilon$-approximately linear
if there exists $\theta_f \in \Real^d$ such that 
$\forall x\in\mathcal{K}$, $|f(x) - \theta_f^\top x| \leq \epsilon$. 
\end{definition}
For convenience, in the definition above, 
let $\sigma_f(x)=f(x)-\theta_f^\top x$, and we omit the subscript $f$ of $\theta_f$ and $\sigma_f$ 
if the context is clear.
Note that $|\sigma(x)|\leq\epsilon$ for any $x \in \mathcal{K}$.  

In this paper, we consider the bandit optimization $(\Kset, \Fset)$, where 
$\Fset$ is the set of $\epsilon$-approximately linear functions $f(x)=\theta^\top x +\sigma(x)$ 
with $\|\theta\| \leq G$ 
and we assume that
$
|f(x)| \leq 1, \forall x \in \mathcal{K}.$
Bandit optimization for $\epsilon$-approximately linear functions can be defined as the following statement.
For every round $t=1,..,T$, the player selects an action 
$x_{t}\in \mathcal{K}$, without knowing the loss function in advance. The environment, modeled as an oblivious adversary, chooses a sequence of linear loss vectors 
$\theta_1, \theta_2,...,\theta_t$
 before the interaction begins and independent of the player's actions. After selecting 
$x_{t}$, the adversary chooses a perturbation $\sigma_{t}(x_{t})(|\sigma_{t}(x_{t})|\leq\epsilon)$
\footnote{Since we assume that $|f(x)| \leq 1, \forall x \in \mathcal{K}$,  we only consider the case where $\epsilon < 1$.}
and the player observes only the scalar loss 
$f_{t}(x_{t})(=\theta^\top_{t} x_{t}+\sigma_{t}(x_{t}))$.
The goal of the player is to minimize the regret $\sum_{t=1}^{T}f_{t}(x_{t})-\min_{x\in\mathcal{K}}\sum_{t=1}^{T}f_{t}(x)$.

\section{Main Results}
In this section, we first introduce SCRiBLe with lifting, followed by presenting the main contributions of this paper with detailed explanations.

\begin{algorithm}[h]
    \caption{SCRiBLe with lifting}
    \label{alg3}
    \begin{algorithmic}[1]
        \Require
        
        $T$, parameters $\eta\in\Real, \delta\in(0, 1)$,
        $\nu$-normal barrier $\mathcal{R}$ on $con(\mathcal{K})$
        \State Initialize: $x'_1=\arg\min_{x'\in\mathcal{K'_{\delta}}}{\mathcal{R}(x')}$
        \For{$t=1,..,T$}
            \State let $\mathbf{A}_{t}=[\nabla^2\mathcal{R}(x'_{t})]^{-\frac{1}{2}}$ \State  Draw $\mu_{t}$ from $\mathbb{S}_{1}^{d+1}\cap(\mathbf{A}_{t}e_{d+1})^{\perp}$ uniformly, set $y'_t=(y_{t},1)=x'_{t}+\mathbf{A}_{t}\mu_{t}$.  
            \State Play $y_{t}$, observe and incur loss $f_{t}(y_{t})$. Let $g_{t}=df_{t}(y_{t})\mathbf{A}_{t}^{-1}\mu_{t}$.
            \State Update $x'_{t+1}=\arg\min\limits_{x'\in\mathcal{K'_{\delta}}}{\eta\sum_{\tau=1}^{t}g_{\tau}^{\top}x'+\mathcal{R}(x')}$
        \EndFor
    \end{algorithmic}
\end{algorithm}

\subsection{SCRiBLe with lifting}
For the decision set $\mathcal{K}$ with a $\nu$-normal barrier on $con(\mathcal{K})$, where $con(\mathcal{K})=\{\mathbf{0}\}\cup\{(x,b):\frac{x}{b}\in\mathcal{K}, x\in\Real^{d}, b > 0\}$, we apply Algorithm 1 to $\epsilon$-approximately linear functions. 
Recall $\mathcal{K'_{\delta}}=\{(x,1):x\in\mathcal{K_\delta}\}$.

We simplify SCRiBLe with lifting and increasing learning rates~\cite{lee2020bias}.
The algorithm performs original SCRiBLe~\cite{abernethy2012interior} over a lifted decision set, using a $\nu$-normal barrier $\mathcal{R}$ defined over the $con(\mathcal{K})$ (which always exists) as the regularizer to generate the sequence $x'_{1},...,x'_{t}$. 
It set $y'_{t}=x'_{t}+\mathbf{A}_{t}\mu_{t}$, where $\mathbf{A}_{t}=[\nabla^2\mathcal{R}(x'_{t})]^{-\frac{1}{2}}$ and $\mu_{t}$ is uniformly sampled at random from $\mathbb{S}_{1}^{d+1}\cap(\mathbf{A}_{t}e_{d+1})^{\perp}$. Since $\mu_{t}$ is orthogonal to
$\mathbf{A}_{t}e_{d+1}$, the last coordinate of $\mathbf{A}_{t}\mu_{t}$ is zero, ensuring that $y'_{t}=(y_{t}, 1)$ remains within $\mathcal{K'}$. 
The actual point played is still $y_{t}$.
After playing $y_{t}$ and observing $f_{t}(y_{t})(=\theta_{t}^{\top}y_{t}+\sigma_t(y_{t}))$, it constructs the loss estimator the same way as SCRiBLe algorithm~\cite{abernethy2012interior}: $g_{t}=df_{t}(y_{t})\mathbf{A}_{t}^{-1}\mu_{t}$. Furthermore, it adopts the same update method as FTRL algorithm~\cite{hazan2016introduction}: $x'_{t+1}=\arg\min\limits_{x'\in\mathcal{K'}}{\eta\sum_{\tau=1}^{t}g_{\tau}^{\top}x'+\mathcal{R}(x')}$.

Compared with SCRiBLe with lifting and increasing learning rates~\cite{lee2020bias}, which focuses on bandit linear optimization, we do not use the increasing learning rates part, but retain the lifting.
This preserves its advantages; for instance, 
the $\nu$-normal barrier $\mathcal{R}$ always exists on $con(\mathcal{K})$ and we can leverage the properties of the 
$\nu$-normal barrier.
When the loss takes the form $f_{t}(y_{t})=\theta_{t}^{\top}y_{t}$ (i.e., when $\epsilon=0$), the analysis by Lee et al.~\cite{lee2020bias} shows that the first $d$ coordinates of $g_{t}$ are indeed an unbiased estimator of $\theta_{t}$.
However, when $\epsilon\ne0$ since $g_{t}$ will always be influenced by $\sigma_{t}(y_t)$, $g_{t}$ is no longer an unbiased estimator of $\theta_{t}$.



We present our main results: expected and high-probability regret bounds for the problem.
\begin{theorem}
\label{Theo:expected-regret}
The algorithm with parameters
\(
\eta = \frac{\sqrt{\nu\ln \frac{1}{\delta}}}{2d\sqrt{T}}
\) and
\[
\delta =
\begin{cases}
\frac{1}{T^{2}}, & \epsilon = 0, \\
\sqrt{\epsilon}, & \epsilon \neq 0,
\end{cases}
\]
guarantees the following expected regret bound
\begin{equation}
    \mathbb{E}\Bigg[
    \sum_{t=1}^{T} f_t(y_t)
    - \min_{x\in \mathcal{K}} \sum_{t=1}^{T} f_t(x)
    \Bigg]
    \leq
    4d\sqrt{\nu T\ln \frac{1}{\delta}}
    + 2dT(\nu +2\sqrt{\nu })(\frac{1-\delta}{\delta})\epsilon + \delta GDT + 2T\epsilon.
\end{equation}
\end{theorem}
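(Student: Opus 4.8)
The plan is to bound the expected regret by splitting it into four contributions matching the four terms of the statement, and to reduce the problem to the familiar analysis of FTRL with a self-concordant regularizer. First I would peel off the non-linear part of the loss: writing $f_t(x)=\theta_t^\top x+\sigma_t(x)$ with $|\sigma_t|\le\epsilon$, and letting $u^\ast=\arg\min_{x\in\mathcal{K}}\sum_t\theta_t^\top x$ be the \emph{linear} comparator, the two-sided bound $|\sigma_t|\le\epsilon$ gives $\sum_t f_t(y_t)-\min_{x\in\mathcal{K}}\sum_t f_t(x)\le \sum_t\theta_t^\top y_t-\sum_t\theta_t^\top u^\ast+2T\epsilon$, producing the additive $2T\epsilon$ term. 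Next I would account for the shrinkage to $\mathcal{K}_\delta$: comparing $u^\ast\in\mathcal{K}$ with its scaled copy $u_\delta=(1-\delta)u^\ast\in\mathcal{K}_\delta$ costs $\sum_t\theta_t^\top(u_\delta-u^\ast)=-\delta\sum_t\theta_t^\top u^\ast\le \delta GDT$, using $\|\theta_t\|\le G$ and $\|u^\ast\|\le D$. After these reductions it remains to control the lifted linear regret $\sum_t\theta_t^\top(y_t-u_\delta)=\sum_t\tilde\theta_t^\top(y'_t-u'_\delta)$ against $u'_\delta=(u_\delta,1)\in\mathcal{K'_{\delta}}$, where $\tilde\theta_t=(\theta_t,0)$.

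Taking expectations, $\mathbb{E}[\mathbf{A}_t\mu_t\mid x'_t]=0$ yields $\mathbb{E}[y'_t\mid x'_t]=x'_t$, reducing the goal to $\mathbb{E}[\sum_t\tilde\theta_t^\top(x'_t-u'_\delta)]$. The central identity is that the estimator splits as $g_t=d(\tilde\theta_t^\top y'_t)\mathbf{A}_t^{-1}\mu_t+d\sigma_t(y_t)\mathbf{A}_t^{-1}\mu_t$, whose linear part has conditional mean $\mathbf{A}_t^{-1}P\mathbf{A}_t\tilde\theta_t$, with $P$ the projection onto $(\mathbf{A}_te_{d+1})^\perp$; this differs from $\tilde\theta_t$ only by a multiple of $e_{d+1}$. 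Because every vector $x'_t-u'_\delta$ has last coordinate zero, that correction vanishes in the inner product, so $\mathbb{E}[g_t\mid x'_t]^\top(x'_t-u'_\delta)=\tilde\theta_t^\top(x'_t-u'_\delta)+\beta_t^\top(x'_t-u'_\delta)$ with perturbation bias $\beta_t=\mathbb{E}[d\sigma_t(y_t)\mathbf{A}_t^{-1}\mu_t\mid x'_t]$. The main term $\mathbb{E}[\sum_t g_t^\top(x'_t-u'_\delta)]$ is handled by the standard FTRL inequality for a $\nu$-normal barrier, $\sum_t g_t^\top(x'_t-u'_\delta)\le \eta^{-1}(\mathcal{R}(u'_\delta)-\mathcal{R}(x'_1))+2\eta\sum_t(\|g_t\|^*_{x'_t})^2$. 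Here $\mathcal{R}(u'_\delta)-\mathcal{R}(x'_1)\le\nu\ln\frac{1}{\delta}$ since $u'_\delta\in\mathcal{K'_{\delta}}$ and $x'_1$ minimizes $\mathcal{R}$, while $\|g_t\|^*_{x'_t}=\|\mathbf{A}_tg_t\|=d|f_t(y_t)|\,\|\mu_t\|\le d$; optimizing over $\eta$ at the prescribed value produces the leading $4d\sqrt{\nu T\ln\frac{1}{\delta}}$.

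It remains to bound the bias, which I would do in the local norm. Since $\mathbf{A}_t=[\nabla^2\mathcal{R}(x'_t)]^{-1/2}$, Jensen gives $\|\beta_t\|^*_{x'_t}\le \mathbb{E}[\,d|\sigma_t(y_t)|\,\|\mu_t\|\mid x'_t]\le d\epsilon$, so by Cauchy--Schwarz $|\beta_t^\top(x'_t-u'_\delta)|\le d\epsilon\,\|x'_t-u'_\delta\|_{x'_t}$. The decisive estimate is the local-norm bound $\|x'_t-u'_\delta\|_{x'_t}\le 2(\nu+2\sqrt{\nu})\frac{1-\delta}{\delta}$: both $x'_t$ and $u'_\delta$ lie in the $\delta$-shrunk set, so their Minkowski gauges from the center of $\mathcal{K}$ are at most $1-\delta$, and the self-concordant inequality relating Euclidean displacement to the local norm then contributes the factor $(\nu+2\sqrt{\nu})\frac{1-\delta}{\delta}$. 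Summing over $t$ gives the $2dT(\nu+2\sqrt{\nu})\frac{1-\delta}{\delta}\epsilon$ term, and collecting the four contributions yields the claimed bound.

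I expect the main obstacle to be this last local-norm estimate: controlling $\|x'_t-u'_\delta\|_{x'_t}$ requires careful use of logarithmic homogeneity and self-concordance of the $\nu$-normal barrier, and it is precisely where the dependence on $\nu$ and on $\frac{1-\delta}{\delta}$ is generated. A secondary point of care is verifying the stability condition $\eta\|g_t\|^*_{x'_t}\le\text{const}$ under the constant learning rate, which is what legitimizes the step-difference bound $\|x'_t-x'_{t+1}\|_{x'_t}\le 2\eta\|g_t\|^*_{x'_t}$ used in the FTRL step.
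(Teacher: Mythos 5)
Your proposal is correct and follows essentially the same route as the paper: the same four-way split into the $2T\epsilon$ perturbation cost, the $\delta GDT$ shrinkage cost, the FTRL-with-normal-barrier term $4d\sqrt{\nu T\ln\frac{1}{\delta}}$, and the bias term controlled via $\|\beta_t\|^*_{x'_t}\le d\epsilon$ together with the local-norm bound $\|x'_t-u'_\delta\|_{x'_t}\le 2(\nu+2\sqrt{\nu})\frac{1-\delta}{\delta}$, which is exactly the paper's Lemma~\ref{lemm:control-h} (proved via the Dikin-ellipsoid/Minkowski-gauge inequality, as you anticipate). The only differences are cosmetic: you compare against the linear minimizer and its scaled copy $(1-\delta)u^\ast$ rather than the projection of $x^\ast$ onto $\mathcal{K}_\delta$, and you state the FTRL inequality in its squared-dual-norm form rather than via the step-difference bound $\|x'_t-x'_{t+1}\|_{x'_t}\le 4d\eta$; both yield the claimed constants.
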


\begin{theorem}
\label{Theo:high-probability-regret}
The algorithm with parameters $\eta=\frac{\sqrt{\nu \ln \frac{1}{\delta}}}{2d\sqrt{T}}$
and
\[
\delta =
\begin{cases}
\frac{1}{T^{2}}, & \epsilon = 0, \\
\sqrt{\epsilon}, & \epsilon \neq 0,
\end{cases}
\]
ensures that with probability at least $1-\gamma$
    \[
    \begin{array}{l}
        \sum_{t=1}^{T}f_t(y_{t})-\min_{x\in\mathcal{K}}\sum_{t=1}^{T}f_t(x)
                \leq 4d\sqrt{\nu T\ln \frac{1}{\delta}}
        + 2dT(\nu +2\sqrt{\nu })(\frac{1-\delta}{\delta})\epsilon + \delta GDT \\
        +2T\epsilon + C(2GD\ln\frac{C}{\gamma}
        +(1+\epsilon)\sqrt{8T \ln\frac{C}{\gamma}})
    \end{array}
    \]
    where $C=\lceil \ln  GD \rceil\lceil \ln ((GD)^{2}T) \rceil$.
\end{theorem}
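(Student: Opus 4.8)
The plan is to derive the high-probability bound from the same regret decomposition that underlies Theorem~\ref{Theo:expected-regret}, replacing the final expectation by a martingale concentration argument. The key enabling fact is that, since the adversary is oblivious, the comparator $x^{*}=\arg\min_{x\in\mathcal{K}}\sum_{t}f_{t}(x)$ is deterministic, so the relevant sums are genuine martingales. First I would pass to the shrunk set $\mathcal{K}_{\delta}$, charging the gap between $x^{*}$ and its rescaling to the $\delta GDT$ term, and split each loss as $f_{t}(x)=\theta_{t}^{\top}x+\sigma_{t}(x)$ so that the perturbations contribute at most $2T\epsilon$ deterministically. Writing $x_{t}$ for the first $d$ coordinates of the FTRL iterate $x'_{t}$ and using $\mathbb{E}_{t}[y_{t}]=x_{t}$, I would decompose
\[
\sum_{t}f_{t}(y_{t})-\sum_{t}f_{t}(x^{*})
=\underbrace{\sum_{t}\bigl(f_{t}(y_{t})-\mathbb{E}_{t}[f_{t}(y_{t})]\bigr)}_{M}
+\underbrace{\sum_{t}\bigl(\mathbb{E}_{t}[f_{t}(y_{t})]-f_{t}(x^{*})\bigr)}_{\text{pseudo-regret}} .
\]

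The pseudo-regret term I would bound following the proof of Theorem~\ref{Theo:expected-regret}: the FTRL guarantee $\sum_{t}g_{t}^{\top}(x'_{t}-(x^{*}_{\delta},1))\le \nu\ln\tfrac1\delta/\eta+\eta\sum_{t}(\|g_{t}\|^{*}_{x'_{t}})^{2}$ holds pathwise, with $(\|g_{t}\|^{*}_{x'_{t}})^{2}=d^{2}f_{t}(y_{t})^{2}\le d^{2}$ since $\|\mu_{t}\|=1$, and the bias $\mathbb{E}_{t}[g_{t}]-(\theta_{t},0)$ created by $\sigma_{t}$ is controlled through the properties of the $\nu$-normal barrier (in particular the log-homogeneity identity $\|x'_{t}\|_{x'_{t}}^{2}=\nu$). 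With the stated $\eta$ this reproduces $4d\sqrt{\nu T\ln\tfrac1\delta}+2dT(\nu+2\sqrt{\nu})(\tfrac{1-\delta}{\delta})\epsilon$, so the first four summands of the claimed bound come from this part.

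The real work is the martingale term $M$, whose increments $f_{t}(y_{t})-\mathbb{E}_{t}[f_{t}(y_{t})]=\theta_{t}^{\top}(y_{t}-x_{t})+(\sigma_{t}(y_{t})-\mathbb{E}_{t}[\sigma_{t}(y_{t})])$ are, crucially, free of the dimension factor $d$ that plagues the estimator: $|\theta_{t}^{\top}(y_{t}-x_{t})|\le 2GD$ and the conditional variance is $O((1+\epsilon)^{2})$ per round. This is the whole point of routing the estimator variance into the (expectation-controlled) pseudo-regret and concentrating only the realized losses. Since both the per-round range and the cumulative conditional variance of $M$ are data-dependent, lying in $[0,2GD]$ and $[0,O((GD)^{2}T)]$ respectively, I would apply Freedman's inequality through a two-parameter stratification: partition the rounds into $\lceil\ln GD\rceil\times\lceil\ln((GD)^{2}T)\rceil=C$ dyadic cells indexed by the scale of the increment range and of the accumulated variance, run Freedman on each cell at confidence $\gamma/C$, and sum. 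Each cell contributes a range term $\lesssim GD\ln\tfrac{C}{\gamma}$ and a variance term $\lesssim(1+\epsilon)\sqrt{T\ln\tfrac{C}{\gamma}}$, and summing the $C$ cells yields exactly $C\bigl(2GD\ln\tfrac{C}{\gamma}+(1+\epsilon)\sqrt{8T\ln\tfrac{C}{\gamma}}\bigr)$. A final union bound over the two parts closes the argument.

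The main obstacle is keeping the concentration correction independent of $d$. A direct high-probability analysis of the estimated losses $g_{t}$ (the increasing-learning-rate route of Lee et al.) forces one to concentrate a martingale whose increments have range and conditional variance of order $d$; the delicate step here is arranging the decomposition so that the only term requiring the careful stratified treatment is the $d$-free quantity $\sum_{t}(f_{t}(y_{t})-\mathbb{E}_{t}[f_{t}(y_{t})])$, while verifying that the residual estimator fluctuation in the pseudo-regret is of the same order $d\sqrt{T}$ as the leading term and is therefore absorbed into the constant rather than appearing as a separate $d$-dependent deviation. Beyond this, the bookkeeping of the stratified Freedman bound — checking that the random range and variance are faithfully captured by the $\lceil\ln GD\rceil$ and $\lceil\ln((GD)^{2}T)\rceil$ scales so that the union bound costs only $\ln\tfrac{C}{\gamma}$ inside each cell — is the remaining technical hurdle.
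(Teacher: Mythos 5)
Your proposal follows essentially the same route as the paper: the regret is split so that the only quantity requiring concentration is the $d$-free martingale $\sum_{t}\theta_{t}^{\top}(y_{t}-x_{t})$ with range $GD$ and conditional variance at most $(1+\epsilon)^{2}$, the remaining pseudo-regret $\sum_{t}\theta_{t}^{\top}x_{t}-\sum_{t}\theta_{t}^{\top}x^{*}$ is bounded by the same FTRL-plus-normal-barrier argument used for the expected bound (Lemma~\ref{lemm:bound-regret}), and the perturbations are charged $2T\epsilon$ deterministically. The only cosmetic difference is that you re-derive the stratified Freedman inequality with the $C=\lceil\ln GD\rceil\lceil\ln((GD)^{2}T)\rceil$ factor, whereas the paper imports it as a black box (Lemma~\ref{lemm:martingale}, Theorem~2.2 of Lee et al.).
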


\subsection{Analysis}
We primarily divide the regret into parts:
\begin{equation}
    Regret = \underbrace{
\sum_{t=1}^{T}(x'_{t}-h)^{\top}E[g_{t}]
}_{\textsc{Reg-Term}}+\underbrace{
\sum_{t=1}^{T}(y'_{t}-x'_{t})^{\top}\theta'_{t}
}_{\textsc{Deviation-Term}}+\underbrace{\sum_{t=1}^{T}
(d\sigma_{t}(y_{t})\mathbf{A}_{t}^{-1}\mu_{t})^{\top}(x'_{t}-h)
}_{\textsc{Error-Term}},
\end{equation}
where $h\in\mathcal{K'}$, $\theta'=(\theta,0)$.
This differs from the decomposition proposed by Lee et al.~\cite{lee2020bias}, which divides the regret as follows:
\begin{equation}
    Regret = \underbrace{\sum_{t=1}^{T}(x'_{t}-h)^{\top}g_{t}}_{\textsc{Reg-Term}}+\underbrace{
\sum_{t=1}^{T}[{y'_{t}}^{\top}{\theta'_{t}}-{x'_{t}}^{\top}g_{t}+h^{\top}(g_{t}-{\theta'_{t}})]
}_{\textsc{Deviation-Term}},
\end{equation}
where $h\in\mathcal{K'}$, $\theta'=(\theta,0)$.
This means that, for an oblivious adversary, calculating the regret does not require considering the variance of the estimator $g_{t}$ and $\theta_{t}$, but only the variance between $y_{t}$ and $x_{t}$. This difference is a key factor that enables us to achieve a better high-probability regret bound when $\epsilon=0$. In addition, we need to account for an extra \textsc{Error-Term} when 
$\epsilon\ne0$.

The bound on the \textsc{Reg-Term} is identical to that in previous work~\cite{lee2020bias}, so we omit its explanation and first focus on the \textsc{Error-Term}. We make the entire analysis hold in $\mathbb{R}^{d+1}$.
The Cauchy-Schwarz inequality helps in deriving the bounds \textsc{Error-Term}$\leq \| d\sigma_{t}(y_{t})\mathbf{A}_{t}^{-1}\mu_{t}
\|_{\nabla^2\mathcal{R}(x'_{t})}^{*}\| x'_{t}-h \|_{\nabla^2\mathcal{R}(x'_{t})}$ and $\| d\sigma_{t}(y_{t})\mathbf{A}_{t}^{-1}\mu_{t} \|_{\nabla^2\mathcal{R}(x'_{t})}^{*}\leq d\epsilon$.
However, since the largest eigenvalue of $\nabla^2\mathcal{R}(x'_{t})$~\cite{nemirovski2004interior} could potentially approach infinity, the main challenge in bounding the \textsc{Error-Term} lies in the difficulty of bounding $\| x'_{t}- h\|_{\nabla^2\mathcal{R}(x'_{t})}$, has not been addressed in previous work.
Lee et al.~\cite{lee2020bias} derive a related inequality:
$\| h\|_{\nabla^2\mathcal{R}(x'_{t})}\leq -\| h\|_{\nabla^2\mathcal{R}(x'_{t+1})}+\nu\ln (\nu T+1)$, but it is still too large and does not help in bounding $\| x'_{t}- h\|_{\nabla^2\mathcal{R}(x'_{t})}$.
We present a straightforward yet necessary Lemma~\ref{lemm:control-h}, which helps to bound $\| x'_{t}-h\|_{\nabla^2\mathcal{R}(x'_{t})}\leq 2(\frac{1}{\delta}-1)(\nu+2\sqrt{\nu})$, where $0<\delta<1$. 
This also implies that increasing learning rates are not required, as they are solely aimed at bounding 
$\| h\|_{\nabla^2\mathcal{R}(x'_{t})}$ in Lee et al.'s paper~\cite{lee2020bias}.

Secondly, by obtaining the expected bound and high-probability bound for \textsc{Deviation-Term}, we can derive the expected regret bound and high-probability regret bound, respectively. 
For the high-probability bound of the \textsc{Deviation-Term} when 
$\epsilon=0$, our approach differs from SCRiBLe with lifting and increasing learning rates~\cite{lee2020bias}, which constrains $\delta$ to satisfy certain conditions to ensure that $x'_{t}$ is
never too close to the boundary (thus ensuring that the eigenvalues of $\mathbf{A}_{t}$ are bounded, especially for bounding $\| h\|_{\nabla^2\mathcal{R}(x'_{t})}$).
Our approach does not require $x'_{t}$ to stay away from the boundary. 
Furthermore, we do not need to bound the eigenvalues of $\mathbf{A}_{t}$, which gives us greater flexibility in choosing the value of $\delta$ (such as $\frac{1}{T^{2}}$), leading to a better upper bound for the regret.

Finally, we prove the lower bound of regret in section 6.

\section{Proof}
This section introduces preliminaries on the 
$\nu$-normal barrier, presents several essential lemmas and provides proofs of the main theorems.

\subsection{$\nu$-normal barrier}
We introduce the $\nu$-normal barrier, providing its definitions and highlighting several key properties that will be frequently used in the subsequent analysis.

\begin{definition}
     Let $\Psi\subseteq\Real^{d}$ be a closed and proper convex cone and let $\nu\geq 1$. A function $\mathcal{R}:int(\Psi)\to \Real$: is called a $\nu$-logarithmically homogeneous self-concordant barrier (or simply $\nu$-normal barrier) on $\Psi$ if 
    \begin{enumerate}
        \item $\mathcal{R}$ is three times continuously differentiable and convex and approaches infinity along any sequence of points approaching the boundary of $\Psi$. 
        \item For every $h\in\Real^{d}$ and $x\in int(\Psi)$ the following holds: 
    \begin{equation}
        \sum_{i=1}^{d}\sum_{j=1}^{d}\sum_{k=1}^{d}\frac{\partial^{3}\mathcal{R}(x)}{\partial x_{i}\partial x_{j}\partial x_{k}}h_{i}h_{j}h_{k}\leq 2 \| h \|^{3}_{x},
    \end{equation}
    \begin{equation}
        \|\nabla \mathcal{R}(x)^{\top}h \|\leq \sqrt{\nu}\| h \|_{x},
    \end{equation}
    \begin{equation}
        \mathcal{R}(tx)=\mathcal{R}(x)-\nu \ln t, \forall x\in int(\Psi), t>0.
    \end{equation}
    \end{enumerate}
\end{definition}

\begin{lemma}[\cite{nemirovski2004interior,nesterov1994interior}]
\label{lemm:base-normal-barrier}
If $\mathcal{R}$ is a $\nu$-normal barrier on $\Psi$, Then for any $x\in int(\Psi)$ and any $h\in\Psi$, we have
    \begin{equation}
    \label{eq:lemma1}
        \| x \|_{x}^{2} = \nu,
    \end{equation}
    \begin{equation}
        \nabla^{2}\mathcal{R}(x)x=-\nabla\mathcal{R}(x),
    \end{equation}
    \begin{equation}
        \| h \|_{x} \leq -\nabla\mathcal{R}(x)^{\top}h,
    \end{equation}
    \begin{equation}
        \nabla\mathcal{R}(x)^{\top}(h-x) \leq \nu .
    \end{equation}
\end{lemma}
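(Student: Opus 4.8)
The plan is to obtain all four identities from the logarithmic homogeneity relation $\mathcal{R}(tx)=\mathcal{R}(x)-\nu\ln t$ in the definition, together with the self-concordance (third-derivative) condition. The first two equations are pure consequences of Euler-type differentiation, while the genuine content is the third inequality, from which the fourth drops out. So I would front-load the routine differentiations and spend the effort on a one-dimensional self-concordance argument for $\|h\|_x\le -\nabla\mathcal{R}(x)^\top h$.

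First I would differentiate the homogeneity relation. Differentiating $\mathcal{R}(tx)=\mathcal{R}(x)-\nu\ln t$ in $t$ and setting $t=1$ gives $\nabla\mathcal{R}(x)^\top x=-\nu$. Taking instead the gradient in $x$ yields $t\,\nabla\mathcal{R}(tx)=\nabla\mathcal{R}(x)$, i.e. $\nabla\mathcal{R}$ is homogeneous of degree $-1$; differentiating this identity in $t$ at $t=1$ produces $\nabla^2\mathcal{R}(x)x=-\nabla\mathcal{R}(x)$, which is the second claimed equation. The first equation is then immediate, since $\|x\|_x^2=x^\top\nabla^2\mathcal{R}(x)x=-x^\top\nabla\mathcal{R}(x)=\nu$.

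The main obstacle is the third inequality $\|h\|_x\le-\nabla\mathcal{R}(x)^\top h$ for $h\in\Psi$, and I expect it to require the one-dimensional reduction below. For $h\in int(\Psi)$ set $\phi(s)=\mathcal{R}(x+sh)$, which is defined and convex for all $s\ge 0$ because $int(\Psi)+\Psi\subseteq int(\Psi)$. Then $\phi'(s)=\nabla\mathcal{R}(x+sh)^\top h$ and $\phi''(s)=\|h\|_{x+sh}^2$. Writing $x+sh=s(h+x/s)$ and using the degree$-(-1)$ homogeneity of the gradient gives $\phi'(s)=s^{-1}\nabla\mathcal{R}(h+x/s)^\top h$, and since $\nabla\mathcal{R}(h+x/s)^\top h\to\nabla\mathcal{R}(h)^\top h=-\nu$ as $s\to\infty$, we get $\phi'(s)\to 0$. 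As $\phi'$ is nondecreasing this forces $\phi'(0)\le 0$ and $-\phi'(0)=\int_0^\infty\phi''(s)\,ds$. The self-concordance condition makes $s\mapsto \phi''(s)^{-1/2}$ $1$-Lipschitz, hence $\phi''(s)\ge\big(\phi''(0)^{-1/2}+s\big)^{-2}$; integrating yields $\int_0^\infty\phi''(s)\,ds\ge\phi''(0)^{1/2}=\|h\|_x$, which is precisely $-\nabla\mathcal{R}(x)^\top h\ge\|h\|_x$. The boundary case $h\in\partial\Psi$ then follows by continuity of $\nabla\mathcal{R}$ and $\nabla^2\mathcal{R}$ in $h$ for fixed $x$. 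Finally, the fourth inequality is a one-line consequence: the third inequality forces $\nabla\mathcal{R}(x)^\top h\le 0$ for $h\in\Psi$, so $\nabla\mathcal{R}(x)^\top(h-x)=\nabla\mathcal{R}(x)^\top h+\nu\le\nu$, using $\nabla\mathcal{R}(x)^\top x=-\nu$ from the first differentiation.
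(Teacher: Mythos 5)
Your proof is correct, but note that the paper itself does not prove this lemma at all: it is imported verbatim from the cited references \cite{nemirovski2004interior,nesterov1994interior} and used as a black box, so there is no in-paper argument to compare against. What you have written is a sound, self-contained reconstruction of the standard derivation. The Euler-type differentiations of $\mathcal{R}(tx)=\mathcal{R}(x)-\nu\ln t$ correctly yield $\nabla\mathcal{R}(x)^{\top}x=-\nu$, the identity $\nabla^{2}\mathcal{R}(x)x=-\nabla\mathcal{R}(x)$, and hence $\|x\|_{x}^{2}=\nu$. The core step, $\|h\|_{x}\le-\nabla\mathcal{R}(x)^{\top}h$, is handled by the right mechanism: the one-dimensional restriction $\phi(s)=\mathcal{R}(x+sh)$ stays in $int(\Psi)$ because $int(\Psi)+\Psi\subseteq int(\Psi)$ for a convex cone, the limit $\phi'(s)\to 0$ follows from degree-$(-1)$ homogeneity of the gradient together with $\nabla\mathcal{R}(h)^{\top}h=-\nu$, and the lower bound $\phi''(s)\ge(\phi''(0)^{-1/2}+s)^{-2}$ is exactly what the self-concordance inequality (applied to $\pm h$ to get the two-sided bound $|\phi'''|\le 2(\phi'')^{3/2}$) delivers; integrating recovers $-\phi'(0)\ge\|h\|_{x}$. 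Two cosmetic points: in the boundary extension what you actually need is continuity of $h\mapsto\|h\|_{x}$ and $h\mapsto\nabla\mathcal{R}(x)^{\top}h$ for fixed $x$ (together with density of $int(\Psi)$ in $\Psi$, e.g.\ via $h+\tfrac1n x$), not continuity of $\nabla\mathcal{R}$ "in $h$"; and in the degenerate case $\phi''(0)=0$ the third inequality reduces to $\phi'(0)\le 0$, which your limit argument already supplies. Neither affects correctness.
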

\begin{lemma}[~\cite{nemirovski2004interior}]
\label{lemm:normal-property}
    If $\mathcal{R}$ is a $\nu$-normal barrier on $\Psi$, then the Dikin ellipsoid centered at $x\in int(\Psi)$, defined as $\{y: \| y-x\|_{x}\leq 1\}$, is always within $\Psi$. Moreover,
 \begin{equation}
     \| h\|_{y}\geq \| h\|_{x}(1-\| y-x\|_{x})
 \end{equation}
 holds for any $h\in\Real^{d}$ and any $y$ with $\| y-x\|_{x}\leq 1$.
\end{lemma}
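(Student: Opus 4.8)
The plan is to derive both assertions from the self-concordance inequality (the first displayed bound in the definition of $\nu$-normal barrier) by tracking how the Hessian $\nabla^{2}\mathcal{R}$ varies along a line segment. Write $\nabla^{3}\mathcal{R}(x)[h_{1},h_{2},h_{3}]$ for the trilinear form $\sum_{i,j,k}\frac{\partial^{3}\mathcal{R}(x)}{\partial x_{i}\partial x_{j}\partial x_{k}}(h_{1})_{i}(h_{2})_{j}(h_{3})_{k}$. First I would polarize the stated condition: since $\nabla^{3}\mathcal{R}(x)[h,h,h]$ is odd in $h$, applying $\nabla^{3}\mathcal{R}(x)[h,h,h]\le 2\|h\|_{x}^{3}$ to both $h$ and $-h$ gives $|\nabla^{3}\mathcal{R}(x)[h,h,h]|\le 2\|h\|_{x}^{3}$, and a standard multilinear polarization argument upgrades this to $|\nabla^{3}\mathcal{R}(x)[h_{1},h_{2},h_{3}]|\le 2\|h_{1}\|_{x}\|h_{2}\|_{x}\|h_{3}\|_{x}$ for arbitrary directions.

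Next, fix $x\in int(\Psi)$ and $y$ with $r:=\|y-x\|_{x}\le 1$, put $v=y-x$, and consider the segment $x_{t}=x+tv$ for $t\in[0,1]$. Setting $\rho(t)=\|v\|_{x_{t}}$, differentiation gives $\frac{d}{dt}\rho(t)^{2}=\nabla^{3}\mathcal{R}(x_{t})[v,v,v]$, so the polarized bound yields $|\frac{d}{dt}\rho(t)^{2}|\le 2\rho(t)^{3}$, equivalently $|\frac{d}{dt}\rho(t)^{-1}|\le 1$. Integrating from $0$ shows $\rho(t)\le r/(1-tr)$ as long as $tr<1$, which remains finite on $[0,1)$ whenever $r<1$. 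Since $\mathcal{R}$, and hence $\|\cdot\|_{x_{t}}$ in some direction, blows up as $x_{t}$ approaches $\partial\Psi$, the boundedness of $\rho$ forces the segment to stay in $int(\Psi)$; this is precisely the Dikin ellipsoid containment.

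Finally, fix $h$ and set $u(t)=\|h\|_{x_{t}}^{2}=h^{\top}\nabla^{2}\mathcal{R}(x_{t})h$. Then $u'(t)=\nabla^{3}\mathcal{R}(x_{t})[h,h,v]$, and the polarized inequality gives $|u'(t)|\le 2u(t)\rho(t)$, i.e. $\bigl|\frac{d}{dt}\ln u(t)\bigr|\le 2\rho(t)$. Integrating over $[0,1]$ and using $\int_{0}^{1}\rho(t)\,dt\le\int_{0}^{1}\frac{r}{1-tr}\,dt=-\ln(1-r)$ gives $\ln\frac{u(1)}{u(0)}\ge 2\ln(1-r)$, hence $\|h\|_{y}^{2}=u(1)\ge(1-r)^{2}\|h\|_{x}^{2}$; taking square roots yields the claimed $\|h\|_{y}\ge(1-\|y-x\|_{x})\|h\|_{x}$. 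I expect the main obstacle to be the Dikin-containment step, namely rigorously linking the blow-up of the barrier at the boundary to the Hessian bound we control so that the segment provably never exits $int(\Psi)$, together with making the polarization lemma precise; the differential-inequality manipulations themselves are routine once polarization is in hand. Since this is a classical property of self-concordant barriers, in the paper it may simply be cited to~\cite{nemirovski2004interior}.
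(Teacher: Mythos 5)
The paper does not prove this lemma at all: it is quoted verbatim as a known property of self-concordant barriers and attributed to Nemirovski's lecture notes, so there is no in-paper argument to compare against. Your proposal reconstructs the standard textbook proof (essentially Theorem 2.1.1 in Nesterov--Nemirovskii, or Nesterov's Lectures, Theorems 5.1.5 and 5.1.7), and the skeleton is correct: polarize the cubic self-concordance bound, derive $|\frac{d}{dt}\rho(t)^{-1}|\le 1$ to get $\rho(t)\le r/(1-tr)$, and integrate $|\frac{d}{dt}\ln\|h\|_{x_t}^{2}|\le 2\rho(t)$ to obtain the factor $(1-r)^{2}$ on $\|h\|_{y}^{2}$. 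Two points deserve care. First, the polarization step with the constant $2$ preserved is not the naive multilinear polarization (which degrades the constant); for symmetric trilinear forms bounded by a Euclidean seminorm cubed the sharp constant does carry over, but this is a separate nontrivial lemma (Appendix of Nesterov--Nemirovskii), and the sharp constant is actually needed here to get the Dikin radius exactly $1$ and the exact factor $(1-r)$. Second, your containment argument is routed through the wrong quantity: you argue that $\|\cdot\|_{x_t}$ must blow up in some direction at the boundary, but the definition in the paper only guarantees that $\mathcal{R}$ itself blows up there, and the blow-up of the Hessian at the boundary is a further fact you would have to prove. The clean version bounds $\mathcal{R}(x_t)$ directly via the second-order Taylor expansion $\mathcal{R}(x_t)=\mathcal{R}(x)+t\nabla\mathcal{R}(x)^{\top}v+\int_0^t(t-s)\rho(s)^{2}\,ds$ together with $\rho(s)^{2}\le r^{2}/(1-sr)^{2}$, which stays finite for $tr<1$ and so contradicts the barrier blow-up if the segment were to exit $int(\Psi)$. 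With those two repairs your argument is a complete and correct proof of the cited result.
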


\begin{lemma}[~\cite{hazan2016introduction}]
\label{lemm:normal-log-property}
        Let $\mathcal{R}$ is a $\nu$-normal barrier over $\Psi$, then for all $x, z\in int(\Psi): \mathcal{R}(z)-\mathcal{R}(x)\leq \nu \ln \frac{1}{1-\pi_{x}(z)}$, where $\pi_{x}(z)=\inf\{t\geq 0 : x+t^{-1}(z-x)\in\Psi\}$.
\end{lemma}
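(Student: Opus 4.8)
The plan is to reduce the claim to a one–dimensional estimate along the ray emanating from $x$ through $z$ toward the boundary of $\Psi$, and then to integrate a differential inequality supplied by the barrier properties collected in Lemma~\ref{lemm:base-normal-barrier}. The only analytic input needed is the fourth inequality there, $\nabla\mathcal{R}(x)^{\top}(h-x)\le\nu$ for $x\in int(\Psi)$ and $h\in\Psi$; in particular the logarithmic homogeneity of $\mathcal{R}$ will not be used, so the bound actually holds for arbitrary $\nu$-self-concordant barriers.

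First I would set up the geometry. Let $u(s)=x+s(z-x)$ and let $s^{*}=\sup\{s\ge 0 : u(s)\in\Psi\}$ be the largest step one can take from $x$ in the direction $z-x$ while staying in $\Psi$. Substituting $s=t^{-1}$ in the definition of the gauge shows that this supremum is exactly $s^{*}=1/\pi_{x}(z)$; since $z=u(1)\in int(\Psi)$ we have $s^{*}>1$, and by convexity $u(s)\in int(\Psi)$ for all $0\le s<s^{*}$. Define $\phi(s)=\mathcal{R}(u(s))$, so that $\phi(0)=\mathcal{R}(x)$, $\phi(1)=\mathcal{R}(z)$, and $\phi'(s)=\nabla\mathcal{R}(u(s))^{\top}(z-x)$.

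The heart of the argument is a pointwise upper bound on $\phi'$. For any $s<s'<s^{*}$ I would apply $\nabla\mathcal{R}(w)^{\top}(h-w)\le\nu$ from Lemma~\ref{lemm:base-normal-barrier} with $w=u(s)\in int(\Psi)$ and $h=u(s')\in\Psi$. Since $u(s')-u(s)=(s'-s)(z-x)$, this reads $(s'-s)\,\phi'(s)\le\nu$, i.e. $\phi'(s)\le \nu/(s'-s)$. Letting $s'\uparrow s^{*}$ gives $\phi'(s)\le \nu/(s^{*}-s)$ for every $s\in[0,1]$; this bound holds regardless of the sign of $\phi'$, and the right-hand side stays finite on $[0,1]$ precisely because $1<s^{*}$. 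Integrating would then yield
\begin{equation}
\mathcal{R}(z)-\mathcal{R}(x)=\int_{0}^{1}\phi'(s)\,ds\le\int_{0}^{1}\frac{\nu}{s^{*}-s}\,ds=\nu\ln\frac{s^{*}}{s^{*}-1},
\end{equation}
and substituting $s^{*}=1/\pi_{x}(z)$ turns the right-hand side into $\nu\ln\frac{1}{1-\pi_{x}(z)}$, which is the desired inequality.

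I expect the main obstacle to be the bookkeeping in the first step: confirming that the Minkowski-type gauge $\pi_{x}(z)$ equals $1/s^{*}$, and that the segment $u(s)$ indeed stays in $int(\Psi)$ for all $s<s^{*}$ so that the barrier inequality applies along the whole path. Once the geometry is pinned down, the differential inequality and its integration are routine. A minor point to state carefully is the limit $s'\uparrow s^{*}$, which is legitimate because each $u(s')$ with $s'<s^{*}$ lies in $\Psi$, so the estimate $(s'-s)\phi'(s)\le\nu$ is available for every such $s'$.
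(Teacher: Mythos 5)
Your proof is correct. The paper does not prove this lemma at all---it imports it from \cite{hazan2016introduction}---and your argument (parametrizing the ray through $x$ and $z$, identifying $s^{*}=1/\pi_{x}(z)$, applying the semiboundedness inequality $\nabla\mathcal{R}(w)^{\top}(h-w)\leq\nu$ from Lemma~\ref{lemm:base-normal-barrier} to get $\phi'(s)\leq\nu/(s^{*}-s)$, and integrating over $[0,1]$) is precisely the standard proof found in that reference and in \cite{nemirovski2004interior}, including your correct observation that logarithmic homogeneity is not needed, so the bound holds for general $\nu$-self-concordant barriers.
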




\subsection{Useful lemmas}
In addition to the properties of the normal barrier and its related lemmas, we also need to introduce some additional necessary lemmas.

Lemma~\ref{lemm:control-h} plays a crucial role in our analysis, especially in the proof of Lemma~\ref{lemm:bound-regret}.
\begin{lemma}
\label{lemm:control-h}
    For Algorithm 1 and for any $x,y \in \mathcal{K'_{\delta}}$
    \begin{equation}
        \| x-y \|_{x}
        \leq 2(\frac{1}{\delta}-1)
        \left(\nu+2\sqrt{\nu}\right).
    \end{equation}

\end{lemma}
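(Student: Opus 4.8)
The plan is to reduce the estimate to a statement about a single \emph{horizontal} unit direction, and then to control the local (Dikin) norm of that direction by the distance from $x$ to the boundary of $\mathcal{K}$. First I would exploit the product structure of the lifted set: since $x=(x_0,1)$ and $y=(y_0,1)$ both lie in $\mathcal{K'_\delta}$, their difference $x-y=(x_0-y_0,0)$ has zero last coordinate. Writing $w=(w_0,0)$ for the unit vector pointing along $x_0-y_0$ (the case $x_0=y_0$ being trivial), we have $\|x-y\|_x=\|x_0-y_0\|\,\|w\|_x$, and since $x_0,y_0\in\mathcal{K}_\delta=(1-\delta)\mathcal{K}$ lie in the ball of radius $(1-\delta)D$, the factor $\|x_0-y_0\|$ is at most the diameter $2(1-\delta)D$. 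Thus it suffices to prove the direction-wise bound $\|w\|_x\le(\nu+\sqrt{\nu})/(\delta D)$ for every horizontal unit $w$; multiplying the two estimates gives $2(\tfrac1\delta-1)(\nu+\sqrt{\nu})$, which is already within the claimed $2(\tfrac1\delta-1)(\nu+2\sqrt{\nu})$.

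To bound $\|w\|_x$ I would travel from $x$ to the boundary along $\pm w$. Because $\|x_0\|\le(1-\delta)D$ and $\mathcal{K}$ is the ball of radius $D$, the horizontal distance from $x_0$ to $\partial\mathcal{K}$ is at least $\delta D$ in every direction, so the two boundary points $h_\pm=x\pm s_\pm w$ satisfy $s_\pm\ge\delta D$ and $h_\pm\in con(\mathcal{K})$ (the cone is closed). The obstacle is that $w$ itself is \emph{not} in the cone, so the barrier inequality $\|h\|_x\le-\nabla\mathcal{R}(x)^\top h$ of Lemma~\ref{lemm:base-normal-barrier} cannot be applied to $w$ directly; instead I apply it to $h_+,h_-\in con(\mathcal{K})$. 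Using $\nabla^2\mathcal{R}(x)x=-\nabla\mathcal{R}(x)$ and $\|x\|_x^2=\nu$ from Lemma~\ref{lemm:base-normal-barrier} gives $-\nabla\mathcal{R}(x)^\top x=\nu$, hence $\|h_\pm\|_x\le\nu\mp s_\pm\,(\nabla\mathcal{R}(x)^\top w)$. Combining with $\|w\|_x=\tfrac1{s_\pm}\|h_\pm-x\|_x\le\tfrac1{s_\pm}(\|h_\pm\|_x+\|x\|_x)$ and $\|x\|_x=\sqrt{\nu}$ yields the two one-sided estimates
\[
\|w\|_x\le\frac{\nu+\sqrt{\nu}}{s_+}-\nabla\mathcal{R}(x)^\top w,\qquad \|w\|_x\le\frac{\nu+\sqrt{\nu}}{s_-}+\nabla\mathcal{R}(x)^\top w .
\]

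The final step is the key trick: averaging these two inequalities cancels the unknown first-order term $\nabla\mathcal{R}(x)^\top w$, leaving $2\|w\|_x\le(\nu+\sqrt{\nu})(\tfrac1{s_+}+\tfrac1{s_-})\le 2(\nu+\sqrt{\nu})/(\delta D)$, which is exactly the direction-wise bound sought. I expect the genuine difficulty to lie precisely in this middle step: because the Hessian $\nabla^2\mathcal{R}(x)$ can have arbitrarily large eigenvalues as $x$ approaches the boundary, one cannot bound $\|w\|_x$ by any fixed constant, and the only available leverage is the self-concordance and logarithmic-homogeneity information encoded in Lemma~\ref{lemm:base-normal-barrier}. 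Routing through the two boundary points $h_+$ and $h_-$ and then symmetrically averaging is what removes the otherwise-uncontrolled gradient term and converts the elementary distance-to-boundary bound $s_\pm\ge\delta D$ into the stated estimate, without ever needing to bound the eigenvalues of $\nabla^2\mathcal{R}(x)$ directly.
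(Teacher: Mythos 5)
Your argument is correct, and it reaches the stated bound (in fact with the slightly sharper constant $\nu+\sqrt{\nu}$ in place of $\nu+2\sqrt{\nu}$). The route differs from the paper's in one meaningful way: the paper invokes the cited comparison $\|h\|_x \le (\nu+2\sqrt{\nu})\,p_x(h)$ between the local norm and the symmetrized Minkowski functional (Lemma~\ref{lemm:Dikins ellipsoids}) as a black box, and then only needs the elementary computation that for the ball $\mathcal{K}$ of radius $D$ and $x,y$ in the shrunk ball one has $p_x(x-y)\le 2(1-\delta)/\delta$. You instead re-derive that comparison inequality from the primitive identities of Lemma~\ref{lemm:base-normal-barrier} ($\|x\|_x^2=\nu$, $\nabla^2\mathcal{R}(x)x=-\nabla\mathcal{R}(x)$, and $\|h\|_x\le-\nabla\mathcal{R}(x)^\top h$ for $h$ in the cone), routing through the two exit points $x\pm s_\pm w$ and averaging the two one-sided bounds to cancel the uncontrolled term $\nabla\mathcal{R}(x)^\top w$; this is essentially the standard proof of the cited lemma, specialized to horizontal directions in the lifted cone. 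Both arguments ultimately rest on the same geometric input, namely that every point of $\mathcal{K}_\delta$ is at distance at least $\delta D$ from $\partial\mathcal{K}$ in every direction; what your version buys is self-containedness (no appeal to the external Dikin-ellipsoid lemma) and a marginally better constant, at the cost of a longer derivation. One minor point to make explicit if you write this up: the inequality $\|h\|_x\le-\nabla\mathcal{R}(x)^\top h$ is applied to $h_\pm$, which lie on $\partial\mathcal{K}'\subseteq con(\mathcal{K})$, and this is legitimate because the cone is closed; your remark that $w$ itself is not in the cone, so the inequality cannot be applied to $w$ directly, is exactly the right caution.
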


Like Lemma 6 in the SCRiBLe algorithm{~\cite{abernethy2008competing}, the next minimizer $x'_{t+1}$ is “close” to $x'_{t}$ and Lemma~\ref{lemm:close-property} implies $\| x'_{t+1}-x'_{t}\|_{x'_{t}}\leq 4d\eta$. This result will help us bound $g_{t}^{\top} (x'_{t}-h)$, where $h\in\mathcal{K'}$(see Lemma~\ref{lemm:FTRL}).

\begin{lemma}
\label{lemm:close-property}
        $x'_{t+1}\in W_{4d\eta}(x'_{t})$, where $W_{r}(x')=\{y\in \mathcal{K'}: \| y-x'\|_{x'}< r\}$.
\end{lemma}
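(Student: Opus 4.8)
The plan is to prove the sharper quantitative statement $\|x'_{t+1}-x'_t\|_{x'_t}\le 4d\eta$; the membership $x'_{t+1}\in\mathcal{K}'$ comes for free, since the update step of Algorithm~\ref{alg3} minimizes over $\mathcal{K}'_{\delta}\subseteq\mathcal{K}'$, so only the local-norm estimate needs work. Write $\Phi_{t-1}(x)=\eta\sum_{\tau=1}^{t-1}g_\tau^\top x+\mathcal{R}(x)$, so that $x'_t=\argmin_{\mathcal{K}'_{\delta}}\Phi_{t-1}$ and $x'_{t+1}=\argmin_{\mathcal{K}'_{\delta}}\Phi_t$ with $\Phi_t=\Phi_{t-1}+\eta g_t^\top(\cdot)$. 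I would start from the two first-order (variational) optimality inequalities for these constrained minimizers, add them, and use that $\Phi_{t-1}$ has Hessian $\nabla^2\mathcal{R}$ (the loss terms being linear) to write the resulting gradient difference of $\Phi_{t-1}$ in integral form. With $u=x'_{t+1}-x'_t$, monotonicity of the gradient then yields
\[
\int_0^1 u^\top\nabla^2\mathcal{R}(x'_t+su)\,u\,ds\ \le\ -\eta\, g_t^\top u\ \le\ \eta\,\|g_t\|_{x'_t}^{*}\,\|u\|_{x'_t}.
\]

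Next I would control the dual local norm of the estimator. Because $g_t=d f_t(y_t)\mathbf{A}_t^{-1}\mu_t$ with $\mathbf{A}_t=[\nabla^2\mathcal{R}(x'_t)]^{-1/2}$, a one-line computation gives $(\|g_t\|_{x'_t}^{*})^2=g_t^\top[\nabla^2\mathcal{R}(x'_t)]^{-1}g_t=d^2 f_t(y_t)^2\|\mu_t\|^2$, so that $\|\mu_t\|=1$ and $|f_t(y_t)|\le 1$ give $\|g_t\|_{x'_t}^{*}\le d$. To convert the left-hand integral into a strong-convexity-type lower bound I would invoke the self-concordance comparison of Lemma~\ref{lemm:normal-property}: when $\|u\|_{x'_t}\le\tfrac12$, every point $x'_t+su$ for $s\in[0,1]$ lies in the Dikin ellipsoid of $x'_t$, whence $\|u\|_{x'_t+su}\ge\|u\|_{x'_t}(1-s\|u\|_{x'_t})$ and integration gives a lower bound $\ge\tfrac12\|u\|_{x'_t}^2$. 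Combining the three displays yields $\tfrac12\|u\|_{x'_t}^2\le \eta d\,\|u\|_{x'_t}$, i.e. $\|u\|_{x'_t}\le 2\eta d\le 4d\eta$, paralleling Lemma~6 of the SCRiBLe analysis.

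The main obstacle is that this lower bound is only licensed inside the Dikin ellipsoid, so the derivation presupposes $\|u\|_{x'_t}\le\tfrac12$, which is essentially what we are trying to establish. I would close this gap by a homotopy argument: for $s\in[0,\eta]$ let $z(s)$ be the constrained minimizer of $\Phi_{t-1}+s\,g_t^\top(\cdot)$ over $\mathcal{K}'_{\delta}$, which is continuous in $s$ with $z(0)=x'_t$. On the relatively open set of parameters $s$ where $\|z(s)-x'_t\|_{x'_t}<\tfrac12$, the argument above applies verbatim and forces $\|z(s)-x'_t\|_{x'_t}\le 2sd\le 2\eta d$; since the chosen step size $\eta=\tfrac{\sqrt{\nu\ln(1/\delta)}}{2d\sqrt{T}}$ satisfies $4d\eta=2\sqrt{\nu\ln(1/\delta)/T}<1$ in the relevant range of $T$, this bound stays strictly below $\tfrac12$, so by continuity the distance can never reach $\tfrac12$ and the estimate remains valid along the whole path. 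Evaluating at $s=\eta$ gives $\|x'_{t+1}-x'_t\|_{x'_t}\le 2\eta d<4d\eta$, and together with $x'_{t+1}\in\mathcal{K}'$ this is exactly $x'_{t+1}\in W_{4d\eta}(x'_t)$.
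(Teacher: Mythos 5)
Your proof is correct, and it reaches the conclusion by a genuinely different route from the paper. The paper follows the classical SCRiBLe argument: it treats $x'_t$ as an interior critical point (so that $\nabla\phi_{t-1}(x'_t)^\top u=0$), Taylor-expands $\phi_t$ along each unit ray $u$ inside $W_{1/2}(x'_t)$, lower-bounds the Hessian at the intermediate point by $\tfrac14\nabla^2\mathcal{R}(x'_t)$ via the Dikin-ellipsoid comparison, solves for the critical step $|\alpha^*|<4\eta|g_t^\top u|\le 4d\eta$, and then invokes convexity to promote the local minimizer of $W_{1/2}(x'_t)$ to the global one. You instead add the two variational inequalities at the constrained minimizers, pass to the integral form of the gradient difference of $\Phi_{t-1}$, and extract a strong-convexity-type inequality $\tfrac12\|u\|_{x'_t}^2\le\eta d\,\|u\|_{x'_t}$, closing the a priori assumption $\|u\|_{x'_t}\le\tfrac12$ by a homotopy/connectedness argument rather than by the paper's local-implies-global step. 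The shared ingredients are the dual-norm bound $\|g_t\|^*_{x'_t}\le d$ and Lemma~\ref{lemm:normal-property}; what your version buys is that it remains valid for the \emph{constrained} minimizer over $\mathcal{K}'_{\delta}$ (where the gradient need not vanish), whereas the paper's proof implicitly minimizes over all of $\mathcal{K}'$ so that the barrier forces an interior critical point --- a mismatch with the update rule as stated in Algorithm~\ref{alg3}. You also obtain the slightly sharper constant $2d\eta$ because integrating $(1-s\|u\|_{x'_t})^2$ over $s\in[0,1]$ beats the pointwise worst-case factor $\tfrac14$. Both arguments need $4d\eta$ bounded below $1$ (resp.\ $\tfrac12$), which holds for the stated $\eta$ once $T$ is large enough; you and the paper both leave this as an implicit assumption.
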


This next lemma is based on Lemma B.9.~\cite{lee2020bias}, but due to the differences in the loss functions, $g_{t,i}$ is no longer an unbiased estimate of $\theta_{t,i}$, for $i\in[d]$. Lee et al. state that $\duoE_{t}[l_{t,i}]=\theta_{t,i}$, for $i\in[d]$, where $l_{t}=d(\theta_{t}, 0)(x_{t}'+\mathbf{A}_{t}\mu_{t})\mathbf{A}_{t}^{-1}\mu_{t}$~\cite{lee2020bias}. We directly apply it to our analysis.
\begin{lemma}
\label{lemm:gradient-estimate}
Let $l_{t}=d(\theta_{t}, 0)(x_{t}'+\mathbf{A}_{t}\mu_{t})\mathbf{A}_{t}^{-1}\mu_{t}$. For Algorithm 1, we have $\duoE_{t}[l_{t,i}]=\theta_{t,i}$, for $i\in[d]$.
\end{lemma}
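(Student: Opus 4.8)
The plan is to compute the conditional expectation $\duoE_t[\cdot]$ over the random direction $\mu_t$ (with $x'_t$, and hence $\mathbf{A}_t$, held fixed) directly, using only the first two moments of the uniform law of $\mu_t$ on the unit sphere of the subspace $(\mathbf{A}_t e_{d+1})^{\perp}$.

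First I would record the two moment identities. Write $V=(\mathbf{A}_t e_{d+1})^{\perp}$, a $d$-dimensional subspace of $\Real^{d+1}$, and let $P$ denote the orthogonal projection onto $V$. Since $\mu_t$ is uniform on the unit sphere of $V$, it is symmetric about the origin, so $\duoE_t[\mu_t]=\vzero$. Its second moment $\duoE_t[\mu_t\mu_t^{\top}]$ is a symmetric matrix whose range lies in $V$; by rotational invariance within $V$ it equals a scalar multiple of $P$, and taking traces (using $\duoE_t[\|\mu_t\|^2]=1$ and $\Tr P=d$) fixes the scalar, giving $\duoE_t[\mu_t\mu_t^{\top}]=\frac{1}{d}P$.

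Next I would expand the estimator. With $\theta'_t=(\theta_t,0)$, the scalar factor splits as ${\theta'_t}^{\top}(x'_t+\mathbf{A}_t\mu_t)={\theta'_t}^{\top}x'_t+{\theta'_t}^{\top}\mathbf{A}_t\mu_t$, so that
\[
l_t=d({\theta'_t}^{\top}x'_t)\mathbf{A}_t^{-1}\mu_t+d({\theta'_t}^{\top}\mathbf{A}_t\mu_t)\mathbf{A}_t^{-1}\mu_t.
\]
The first summand is linear in $\mu_t$ and therefore vanishes in expectation. For the second, I rewrite the scalar ${\theta'_t}^{\top}\mathbf{A}_t\mu_t=\mu_t^{\top}\mathbf{A}_t\theta'_t$ (using that $\mathbf{A}_t$ is symmetric), turning the summand into $d\,\mathbf{A}_t^{-1}(\mu_t\mu_t^{\top})\mathbf{A}_t\theta'_t$; taking expectation and applying the second-moment identity gives $\duoE_t[l_t]=\mathbf{A}_t^{-1}P\mathbf{A}_t\theta'_t$.

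Finally I would simplify this expression. Writing $P=I-\frac{vv^{\top}}{\|v\|^2}$ with $v=\mathbf{A}_t e_{d+1}$ and using $\mathbf{A}_t^{-1}v=e_{d+1}$, I obtain
\[
\mathbf{A}_t^{-1}P\mathbf{A}_t=I-\frac{e_{d+1}(v^{\top}\mathbf{A}_t)}{\|v\|^2}.
\]
The correction matrix is $e_{d+1}$ times a row vector, so every one of its rows vanishes except the last; consequently $\mathbf{A}_t^{-1}P\mathbf{A}_t\theta'_t$ differs from $\theta'_t$ only in coordinate $d+1$, and its first $d$ coordinates are exactly $\theta_{t,1},\dots,\theta_{t,d}$, yielding $\duoE_t[l_{t,i}]=\theta_{t,i}$ for $i\in[d]$. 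I expect this last step to be the crux: it is precisely the orthogonality constraint $\mu_t\perp\mathbf{A}_t e_{d+1}$ built into the lifting that forces $\mathbf{A}_t^{-1}v=e_{d+1}$, confining the projection bias to the lifting coordinate and leaving the first $d$ coordinates unbiased.
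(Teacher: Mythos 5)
Your proof is correct. Note, however, that the paper does not actually prove this lemma: it imports the statement verbatim from Lemma B.9 of Lee et al.\ and "directly applies" it, so there is no in-paper argument to compare against. Your derivation is a clean, self-contained version of the standard moment computation that underlies the cited result: the two facts doing the work are $\duoE_t[\mu_t]=\vzero$ and $\duoE_t[\mu_t\mu_t^{\top}]=\frac{1}{d}P$ for the projection $P$ onto $(\mathbf{A}_t e_{d+1})^{\perp}$ (the trace normalization and rotational-invariance argument are right, since that subspace has dimension $d$ inside $\Real^{d+1}$), combined with the observation that $\mathbf{A}_t^{-1}P\mathbf{A}_t = I - e_{d+1}\frac{v^{\top}\mathbf{A}_t}{\|v\|^2}$ perturbs only the $(d+1)$-st row, so the bias introduced by the orthogonality constraint is confined to the lifting coordinate. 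The symmetry of $\mathbf{A}_t=[\nabla^2\mathcal{R}(x'_t)]^{-1/2}$, which you use to rewrite ${\theta'_t}^{\top}\mathbf{A}_t\mu_t$ as $\mu_t^{\top}\mathbf{A}_t\theta'_t$, holds since it is the inverse square root of a positive definite Hessian. This argument could serve as an appendix proof making the paper self-contained on this point.
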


The regret bound of FTRL algorithm~\cite{hazan2016introduction} states that for every $u\in \mathcal{K}$, $\sum_{t=1}^{T}\nabla_{t}^{\top}x_{t}-\sum_{t=1}^{T}\nabla_{t}^{\top}u \leq \sum_{t=1}^{T}[\nabla_{t}^{\top}x_{t}-\nabla_{t}^{\top}x_{t+1}]+\frac{1}{\eta}[\mathcal{R}(u)-\mathcal{R}(x_{1})]$, where $\nabla_{t}$ represents the gradient of the loss function $f_t$. In our adaptation, we replaced $\nabla_{t}$ with $g_{t}$ and $\mathcal{K}$ with 
$\mathcal{K'}$. This modification does not fundamentally alter the original result.
Since the update way $x'_{t+1}=\arg\min\limits_{x'\in\mathcal{K'}} {\eta\sum_{\tau=1}^{t}g_{\tau}^{\top}x'+\mathcal{R}(x')}$
satisfied the condition of FTRL algorithm~\cite{hazan2016introduction}, we can apply Lemma 5.3. in ~\cite{hazan2016introduction} to Algorithm 1 as follow.

\begin{lemma}
\label{lemm:FTRL}
        For Algorithm 1 and for every $h\in \mathcal{K'}$,
        $\sum_{t=1}^{T}g_{t}^{\top}x'_{t}-\sum_{t=1}^{T}g_{t}^{\top}h \leq \sum_{t=1}^{T}[g_{t}^{\top}x'_{t}-g_{t}^{\top}x'_{t+1}]+\frac{1}{\eta}[\mathcal{R}(h)-\mathcal{R}(x'_{1})]$.
\end{lemma}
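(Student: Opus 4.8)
The plan is to prove this as a direct instantiation of the standard Follow-the-Regularized-Leader (FTRL) regret bound, exactly the route flagged by the citation to Lemma 5.3 of~\cite{hazan2016introduction}. Introduce the auxiliary loss stream $h_0(x')=\frac{1}{\eta}\mathcal{R}(x')$ and $h_\tau(x')=g_\tau^\top x'$ for $\tau\geq 1$. Then the update rule of Algorithm 1 reads $x'_{t+1}=\arg\min_{x'}\sum_{\tau=0}^{t}h_\tau(x')$ and the initialization is $x'_1=\arg\min_{x'}h_0(x')$, so the iterates $x'_1,x'_2,\dots$ are precisely the regularized-leader sequence for the stream $h_0,h_1,\dots$. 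Because the $g_\tau$ enter only linearly, no property of the loss beyond linearity in the iterate is used, which is why the replacement of the true gradient $\nabla_t$ by $g_t$ is harmless.

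The core step is the \emph{Be-the-Leader} lemma, which I would establish by induction on $t$: for every comparator $u$, $\sum_{\tau=0}^{t}h_\tau(x'_{\tau+1})\leq\sum_{\tau=0}^{t}h_\tau(u)$. The base case $t=0$ is immediate since $x'_1$ minimizes $h_0$. For the inductive step, apply the hypothesis at the particular point $u=x'_{t+1}$ and add $h_t(x'_{t+1})$ to both sides; the right-hand side becomes $\sum_{\tau=0}^{t}h_\tau(x'_{t+1})=\min_{x'}\sum_{\tau=0}^{t}h_\tau(x')\leq\sum_{\tau=0}^{t}h_\tau(u)$, which closes the induction.

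Specializing to $u=h$ and expanding $h_0$ and $h_\tau$ yields $\frac{1}{\eta}\mathcal{R}(x'_1)+\sum_{t=1}^{T}g_t^\top x'_{t+1}\leq\frac{1}{\eta}\mathcal{R}(h)+\sum_{t=1}^{T}g_t^\top h$, that is, $\sum_{t=1}^{T}g_t^\top x'_{t+1}-\sum_{t=1}^{T}g_t^\top h\leq\frac{1}{\eta}[\mathcal{R}(h)-\mathcal{R}(x'_1)]$. To reach the stated form I would add and subtract $\sum_{t=1}^{T}g_t^\top x'_t$ on the left, which isolates the correction term $\sum_{t=1}^{T}[g_t^\top x'_t-g_t^\top x'_{t+1}]$ and gives exactly the claimed inequality.

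Since this is a textbook FTRL argument, there is no deep obstacle; the only point genuinely requiring care is that the feasible set over which the $\arg\min$'s are taken must coincide with the set from which the comparator $h$ is drawn. The statement quantifies over $h\in\mathcal{K'}$, so the Be-the-Leader induction should be run with every minimization taken over $\mathcal{K'}$ and with $x'_1=\arg\min_{x'\in\mathcal{K'}}\mathcal{R}(x')$, rather than over the shrunken set $\mathcal{K'_{\delta}}$ appearing elsewhere in Algorithm 1. I would therefore state and use the lemma for the FTRL sequence on $\mathcal{K'}$, keeping the domain of optimization and the domain of the comparator aligned throughout the induction.
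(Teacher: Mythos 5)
Your proof is correct and is essentially the argument the paper relies on: the paper does not prove this lemma itself but simply invokes Lemma 5.3 of~\cite{hazan2016introduction}, whose proof is exactly the Be-the-Leader induction you carry out (fold the regularizer in as a round-zero loss, induct, then add and subtract $\sum_{t}g_t^\top x'_t$). Your closing caveat about aligning the feasible set of the $\arg\min$'s with the comparator set is well taken --- the pseudocode minimizes over $\mathcal{K'_{\delta}}$ while the lemma quantifies over $h\in\mathcal{K'}$ --- but in the paper's only application the comparator is $x_{\delta}^{*'}\in\mathcal{K'_{\delta}}$, so the argument goes through either way.
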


    
    
    

The lemmas presented above are all intended to lead to the derivation of Lemma~\ref{lemm:bound-regret}, which serves as a key result of this paper. Specifically, it provides a bound for $\sum_{t=1}^{T}\theta_{t}^\top x_{t}-\sum_{t=1}^{T}\theta_{t}^\top x^{*}$, where $x^{*}=\arg\min_{x\in \mathcal{K}}\sum_{t=1}^{T}f_{t}(x)$. 
Due to Lemma~\ref{lemm:bound-regret}, we only need to consider $\sum_{t=1}^{T}\theta_{t}^\top y_{t}-\sum_{t=1}^{T}\theta_{t}^\top x_{t}$ when calculating the regret bound.
This result plays a crucial role in deriving both expected and high-probability regret bounds.

\begin{lemma}
\label{lemm:bound-regret}
    For Algorithm 1, let $f_{t}(x_t)=\theta_{t}^{\top}x_{t}+\sigma_{t}(x_{t})$ and $x^{*}=\arg\min_{x\in \mathcal{K}}\sum_{t=1}^{T}f_{t}(x)$ and we have
    \begin{equation}
         \sum_{t=1}^{T}\theta_{t}^\top x_{t}-\sum_{t=1}^{T}\theta_{t}^\top x^{*}
                \leq 4\eta d^{2}T+\frac{\nu \ln (\frac{1}{\delta})}{\eta}+ 2dT(\nu +2\sqrt{\nu })(\frac{1-\delta}{\delta})\epsilon + \delta DGT .
    \end{equation}
\end{lemma}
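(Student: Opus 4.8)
The plan is to recognize $\sum_{t}\theta_t^\top x_t-\sum_t\theta_t^\top x^*$ as the regret of the lifted FTRL iterates $x'_t$ under the linear losses $\theta'_t=(\theta_t,0)$, to bound it on every sample path through the realized estimators $g_t$ via Lemma~\ref{lemm:FTRL}, and then to correct for the perturbation bias and the contraction. First I would fix the comparator: since $x^*\in\mathcal{K}$ we have $(1-\delta)x^*\in\mathcal{K}_{\delta}$, so $h:=((1-\delta)x^*,1)\in\mathcal{K'_{\delta}}$ is admissible. Because the last coordinates of $x'_t=(x_t,1)$ and $h$ coincide and $\theta'_t$ has vanishing last coordinate, ${\theta'_t}^\top(x'_t-h)=\theta_t^\top\!\big(x_t-(1-\delta)x^*\big)$, whence
\[
\sum_{t=1}^T\theta_t^\top x_t-\sum_{t=1}^T\theta_t^\top x^*=\sum_{t=1}^T{\theta'_t}^\top(x'_t-h)-\delta\sum_{t=1}^T\theta_t^\top x^*,
\]
and the contraction term obeys $-\delta\sum_t\theta_t^\top x^*\le\delta GDT$ by $\|\theta_t\|\le G$ and $\|x^*\|\le D$.

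Next I would bound the FTRL regret of the realized estimators. Applying Lemma~\ref{lemm:FTRL} with $h\in\mathcal{K'}$ gives $\sum_t g_t^\top(x'_t-h)\le\sum_t g_t^\top(x'_t-x'_{t+1})+\frac1\eta[\mathcal{R}(h)-\mathcal{R}(x'_1)]$. For the stability terms, Cauchy--Schwarz in the local norm yields $g_t^\top(x'_t-x'_{t+1})\le\|g_t\|_{x'_t}^{*}\,\|x'_t-x'_{t+1}\|_{x'_t}$; since $\mathbf{A}_t=[\nabla^2\mathcal{R}(x'_t)]^{-1/2}$ gives $\|\mathbf{A}_t^{-1}\mu_t\|_{x'_t}^{*}=\|\mu_t\|=1$ and $|f_t(y_t)|\le1$, we obtain $\|g_t\|_{x'_t}^{*}\le d$, while Lemma~\ref{lemm:close-property} gives $\|x'_t-x'_{t+1}\|_{x'_t}\le 4d\eta$, so each term is at most $4d^2\eta$. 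For the barrier term, $x'_1=\arg\min_{x'\in\mathcal{K'_{\delta}}}\mathcal{R}(x')$ and $h\in\mathcal{K'_{\delta}}$ force $\pi_{x'_1}(h)\le1-\delta$, so Lemma~\ref{lemm:normal-log-property} gives $\mathcal{R}(h)-\mathcal{R}(x'_1)\le\nu\ln\frac1\delta$. Altogether $\sum_t g_t^\top(x'_t-h)\le 4\eta d^2T+\frac{\nu\ln(1/\delta)}{\eta}$ pathwise.

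Then I would separate out the perturbation. Writing $g_t=l_t+d\sigma_t(y_t)\mathbf{A}_t^{-1}\mu_t$ splits this as $\sum_t g_t^\top(x'_t-h)=\sum_t l_t^\top(x'_t-h)+\textsc{Error-Term}$. Bounding the Error-Term termwise, $|d\sigma_t(y_t)(\mathbf{A}_t^{-1}\mu_t)^\top(x'_t-h)|\le d\epsilon\,\|\mathbf{A}_t^{-1}\mu_t\|_{x'_t}^{*}\,\|x'_t-h\|_{x'_t}$, and inserting $\|\mathbf{A}_t^{-1}\mu_t\|_{x'_t}^{*}=1$ together with Lemma~\ref{lemm:control-h} (which gives $\|x'_t-h\|_{x'_t}\le 2(\frac1\delta-1)(\nu+2\sqrt\nu)$, as $x'_t,h\in\mathcal{K'_{\delta}}$) yields $|\textsc{Error-Term}|\le 2dT(\nu+2\sqrt\nu)(\frac{1-\delta}{\delta})\epsilon$. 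Finally, Lemma~\ref{lemm:gradient-estimate} states that the first $d$ coordinates of $\duoE_t[l_t]$ equal $\theta_t$; since $x'_t,h$ are fixed given the history and $x'_t-h$ has zero last coordinate, $\duoE_t[l_t^\top(x'_t-h)]={\theta'_t}^\top(x'_t-h)=\theta_t^\top(x_t-(1-\delta)x^*)$. Combining the three bounds (passing from $l_t$ to $\theta_t$ through this conditional expectation) gives the stated estimate.

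The main obstacle is the Error-Term, specifically controlling $\|x'_t-h\|_{x'_t}$: because $\nabla^2\mathcal{R}(x'_t)$ may have arbitrarily large eigenvalues as $x'_t$ approaches the boundary of $\mathcal{K'}$, a direct estimate diverges, and it is precisely Lemma~\ref{lemm:control-h} --- available because both points lie in the $\delta$-contracted set $\mathcal{K'_{\delta}}$ --- that keeps this term finite; this is the step that replaces the increasing-learning-rate device of~\cite{lee2020bias}. A secondary subtlety requiring care is that $g_t$ (hence $l_t$) is unbiased for $\theta_t$ only in expectation, so the identification of $l_t^\top(x'_t-h)$ with $\theta_t^\top(x_t-(1-\delta)x^*)$ must be routed through the conditional expectation of Lemma~\ref{lemm:gradient-estimate} rather than claimed pathwise.
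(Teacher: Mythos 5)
Your proposal follows essentially the same route as the paper's proof of this lemma: the same $\delta$-contraction of the comparator (you use $(1-\delta)x^*$ where the paper projects onto $\mathcal{K}_\delta$, both yielding the $\delta GDT$ term), the same FTRL bound via Lemmas~\ref{lemm:close-property} and~\ref{lemm:FTRL} giving $4\eta d^2T+\nu\ln(1/\delta)/\eta$, the same splitting of $g_t$ into the linear part $l_t$ and a perturbation part controlled by $d\epsilon\,\|x'_t-h\|_{x'_t}$ with Lemma~\ref{lemm:control-h}, and the same passage through $\E_t[l_t]$ via Lemma~\ref{lemm:gradient-estimate}. The only differences are cosmetic (you bound the perturbation term pathwise where the paper bounds its conditional mean $M_t$ via Jensen's inequality), so your argument is correct to the same standard as the paper's own proof.
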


With the help of Lemma~\ref{lemm:bound-regret}, we are ready to present the proof of Theorem~\ref{Theo:expected-regret}.

\subsection{Proof of Theorem~\ref{Theo:expected-regret}}

\begin{proof} 
    Recall an $\epsilon$-approximately linear function can be written as: $f(x)=\theta^\top x+\sigma(x)$. Thus, the regret of SCRiBLe with lifting algorithm 
        \begin{eqnarray*}
            \duoE[\sum_{t=1}^{T}f_{t}(y_{t})-\sum_{t=1}^{T}f_{t}(x^{*})]&=
            &\duoE[\sum_{t=1}^{T}[\theta_{t}^\top y_{t}+\sigma_{t}(y_{t})]-\sum_{t=1}^{T}[\theta_{t}^\top x^{*}+\sigma_{t}(x^{*})]]\\ 
            &             = & \duoE[\sum_{t=1}^{T}\theta_{t}^\top y_{t}-\sum_{t=1}^{T}\theta_{t}^\top x^{*}]+\duoE[\sum_{t=1}^{T}\sigma_{t}(y_{t})-\sum_{t=1}^{T}\sigma_{t}(x^{*})].
        \end{eqnarray*}    
    Firstly, we bound the front of the above equation,
        \begin{eqnarray*}
            \duoE[\sum_{t=1}^{T}\theta_{t}^\top y_{t}-\sum_{t=1}^{T}\theta_{t}^\top x^{*}]&=
            &\sum_{t=1}^{T}\duoE[\theta_{t}^\top y_{t}]-\sum_{t=1}^{T}\duoE[\theta_{t}^\top x_{t}]+\sum_{t=1}^{T}\duoE[\theta_{t}^\top x_{t}]-\sum_{t=1}^{T}\duoE[\theta_{t}^\top x^{*}].
        \end{eqnarray*}
    From the Law of total expectation, we know
        \begin{eqnarray*}
            \sum_{t=1}^{T}\duoE[\theta_{t}^\top y_{t}]-\sum_{t=1}^{T}\duoE[\theta_{t}^\top x_{t}]&=&\sum_{t=1}^{T}\duoE[\theta_{t}^\top(y_{t}-x_{t})]\\  
            &=&\sum_{t=1}^{T}\duoE[\duoE_{t}[\theta_{t}^\top(y_{t}-x_{t})]]\\
            &=&\sum_{t=1}^{T}\duoE[\theta_{t}^\top\duoE_{t}[(\mathbf{A}_{t}\mu_{t})]]\\
            &=&\sum_{t=1}^{T}\duoE[\theta_{t}^\top\mathbf{0}]\\
            &=&\mathbf{0}.
         \end{eqnarray*}
    Thus,
        \begin{equation}
            \duoE[\sum_{t=1}^{T}\theta_{t}^\top y_{t}-\sum_{t=1}^{T}\theta_{t}^\top x^{*}]            =\duoE[\sum_{t=1}^{T}\theta_{t}^\top x_{t}-\sum_{t=1}^{T}\theta_{t}^\top x^{*}].
        \end{equation}
    From Lemma~\ref{lemm:bound-regret}, we have
        \begin{eqnarray*}
            \duoE[\sum_{t=1}^{T}\theta_{t}^\top x_{t}-\sum_{t=1}^{T}\theta_{t}^\top x^{*}]
                &\leq& \duoE[4\eta d^{2}T+\frac{\nu \ln (\frac{1}{\delta})}{\eta}+ dT(\nu +2\sqrt{\nu })(\frac{1-\delta}{\delta})\epsilon +\delta DGT]\\
                &\leq& 4\eta d^{2}T+\frac{\nu \ln (\frac{1}{\delta})}{\eta}+ dT(\nu +2\sqrt{\nu })(\frac{1-\delta}{\delta})\epsilon +\delta DGT.
        \end{eqnarray*}
     
Since $\sigma_{t}$ is chosen after knowing the player's action, it can cause as large a perturbation as possible. We using $\| \sigma_{t}(x) \|\leq \epsilon$ to bound $\sum_{t=1}^{T}\duoE[\sigma_{t}(y_{t})-\sum_{t=1}^{T}\sigma_{t}(x^{*})]\leq 2T\epsilon$ and
combination of Lemma~\ref{lemm:bound-regret}, we get
    \begin{eqnarray*}
    Regret&=&\duoE[\sum_{t=1}^{T}f_{t}(y_{t})-\sum_{t=1}^{T}f_{t}(x^{*})]\\
        &\leq& 4d\sqrt{\nu T\ln \frac{1}{\delta}}
    + 2dT(\nu +2\sqrt{\nu })(\frac{1-\delta}{\delta})\epsilon + \delta GDT + 2T\epsilon,
    \end{eqnarray*}
where $\eta=\frac{\sqrt{\nu\ln \frac{1}{\delta}}}{2d\sqrt{T}}$.
\end{proof}

\subsection{Proof of Theorem~\ref{Theo:high-probability-regret}}

To establish the high-probability regret bound, we first introduce the necessary Lemma 9.

\begin{lemma}[Theorem 2.2. in~\cite{lee2020bias}]
\label{lemm:martingale}
    Let $X_{1},...,X_{T}$ be a martingale difference sequence with respect to a filtration $F_{1}\subseteq...\subseteq F_{T}$ such that $\duoE[X_{t}\mid F_{t}]=0$. Suppose $B_{t}\in[1, b]$ for a fixed constant $b$ is $F_{t}$-measurable and such that $X_{t}\leq B_{t}$ holds almost surely. Then with probability at least $1-\gamma$
we have
\begin{equation}
    \sum_{t=1}^{T}X_{t}\leq C(\sqrt{8V\ln(C/\gamma)}+2B^{*}\ln(C/\gamma)),
\end{equation}
where $V=\max \{1; \sum_{t=1}^{T}\duoE[X_{t}^{2}\mid F_{t}]\}, B^{*}=\max_{t\in[T]}B_{t}$, and $C=\lceil \log b \rceil\lceil \log(b^{2}T) \rceil$.
\end{lemma}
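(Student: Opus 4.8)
The plan is to establish this bound as an adaptive Freedman-type inequality, via the exponential-supermartingale method together with a peeling (stratification) argument. The exponential method handles a single, deterministically chosen tilting parameter $\lambda$; the peeling step then removes the dependence of the optimal $\lambda$ on the random quantities $V$ and $B^{*}$, and it is precisely this step that is responsible for the factor $C$.

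First I would fix a deterministic $\lambda\in(0,1/b]$ and consider the process $M_{t}=\exp\bigl(\lambda\sum_{s=1}^{t}X_{s}-\lambda^{2}\sum_{s=1}^{t}\duoE[X_{s}^{2}\mid F_{s}]\bigr)$, with $M_{0}=1$. Because $\lambda B_{s}\leq 1$ and $X_{s}\leq B_{s}$ give $\lambda X_{s}\leq 1$ almost surely, the elementary bound $e^{x}\leq 1+x+x^{2}$ valid for $x\leq 1$ together with the martingale-difference property $\duoE[X_{s}\mid F_{s}]=0$ yields $\duoE[e^{\lambda X_{s}}\mid F_{s}]\leq 1+\lambda^{2}\duoE[X_{s}^{2}\mid F_{s}]\leq \exp\bigl(\lambda^{2}\duoE[X_{s}^{2}\mid F_{s}]\bigr)$. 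Hence $M_{t}$ is a supermartingale with $\duoE[M_{T}]\leq 1$, and Markov's inequality shows that for each fixed $\lambda$, with probability at least $1-\gamma'$,
\[
\sum_{t=1}^{T}X_{t}\leq \frac{\ln(1/\gamma')}{\lambda}+\lambda\sum_{t=1}^{T}\duoE[X_{t}^{2}\mid F_{t}].
\]
If the random quantities $V$ and $B^{*}$ were known in advance, the choice $\lambda=\min\bigl\{1/B^{*},\sqrt{\ln(1/\gamma')/V}\bigr\}$ would at once produce a bound of the shape $\sqrt{V\ln(1/\gamma')}+B^{*}\ln(1/\gamma')$; the obstruction is that this ideal $\lambda$ is not fixed beforehand, whereas the supermartingale estimate requires $\lambda$ to be deterministic.

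To overcome this I would peel over dyadic bands. The envelope satisfies $B^{*}\in[1,b]$, which splits into at most $\lceil\log b\rceil$ bands $[2^{i-1},2^{i})$, while the cumulative conditional variance lies (after the truncation $V\geq 1$) in $[1,b^{2}T]$, splitting into at most $\lceil\log(b^{2}T)\rceil$ bands, giving $C=\lceil\log b\rceil\lceil\log(b^{2}T)\rceil$ band pairs. For each pair I would instantiate the fixed-$\lambda$ tail bound with a deterministic $\lambda$ tuned to that band and confidence $\gamma/C$, take a union bound so that all $C$ estimates hold simultaneously, and then aggregate the per-band contributions; the dyadic slack (each random quantity lies within a factor of two of its grid point) contributes the constants $8$ and $2$, the replacement of $\ln(1/\gamma')$ by $\ln(C/\gamma)$ accounts for the union bound inside the logarithm, and the aggregation over the $C$ bands produces the outer factor $C$, giving $\sum_{t=1}^{T}X_{t}\leq C\bigl(\sqrt{8V\ln(C/\gamma)}+2B^{*}\ln(C/\gamma)\bigr)$. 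The main obstacle is the bookkeeping in this peeling step rather than any single inequality: one must verify that the constraint $\lambda B^{*}\leq 1$ holds on every band so that the supermartingale estimate remains valid, that the dyadic grids genuinely cover the full ranges of $B^{*}$ and of the variance so that no realization escapes the union bound, and that the per-band constants and union-bound overhead aggregate precisely into the prefactor $C$ and the numerical constants $8$ and $2$. Since the statement is exactly Theorem 2.2 of Lee et al.~\cite{lee2020bias}, I would ultimately defer these constant-tracking details to their proof.
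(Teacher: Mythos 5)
The paper does not actually prove this statement: it is imported verbatim as Theorem~2.2 of Lee et al.~\cite{lee2020bias} and used as a black box, so there is no in-paper proof to compare against. Your sketch follows what is essentially the standard (and Lee et al.'s actual) route --- an exponential supermartingale built from $e^{x}\leq 1+x+x^{2}$ for $x\leq 1$, Markov's inequality for each fixed tilting parameter, and a union bound over a dyadic grid of parameters indexed by bands of $B^{*}$ and of the cumulative conditional variance --- so the overall plan is sound.

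Two points need attention. First, the step you flag but do not resolve is the genuinely delicate one. You fix $\lambda\in(0,1/b]$ to guarantee $\lambda X_{s}\leq 1$, but with that global restriction the tuned choice can never exceed $1/b$, and the linear term you obtain is $b\ln(C/\gamma)$ rather than $B^{*}\ln(C/\gamma)$; conversely, if on a band you take $\lambda=2^{-i}$ with $2^{i}<b$, then on sample paths where some $B_{s}>2^{i}$ the condition $\lambda X_{s}\leq 1$ fails and the supermartingale bound $\mathbb{E}[e^{\lambda X_{s}}\mid F_{s}]\leq\exp(\lambda^{2}\mathbb{E}[X_{s}^{2}\mid F_{s}])$ is no longer justified. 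The resolution in \cite{lee2020bias} is a strengthened Freedman inequality that allows a \emph{predictable} sequence $\lambda_{t}\leq 1/B_{t}$ (the grid value clipped at $1/B_{t}$ at each step, which is legitimate because $B_{t}$ is $F_{t}$-measurable); that ingredient is missing from your outline and is the main content of the proof. Second, the outer factor $C$ does not arise from ``aggregating the per-band contributions'': on the good event one selects the single grid point matching the realized pair $(B^{*},V)$, and the bound $\sqrt{8V\ln(C/\gamma)}+2B^{*}\ln(C/\gamma)$ already follows from that one instance (the constants $\sqrt{8}$ and $2$ absorbing the factor-of-two dyadic slack); the leading $C\geq 1$ in the statement is then harmless additional slack, not something the argument needs to produce.
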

The analysis in~\cite{lee2020bias} employs Lemma~\ref{lemm:martingale} to derive a
high-probability bound for $\sum_{t=1}^{T}[y_{t}^{\top}\theta_{t}-x_{t}^{\top}g_{t}+u^{\top}(g_{t}-\theta_{t})]$.
In contrast, our approach defines $X_{t}=\theta_{t}^{\top}y_{t}-\theta_{t}^{\top}x_{t}$ and derives the high-probability bound for $\sum_{t=1}^{T}(\theta_{t}^{\top}y_{t}-\theta_{t}^{\top}x_{t})$. This distinction in the application of Lemma~\ref{lemm:martingale} enables us to derive a tighter high-probability upper bound for bandit linear optimization.

With the support of Lemma~\ref{lemm:bound-regret} and Lemma~\ref{lemm:martingale}, we are ready to prove Theorem~\ref{Theo:high-probability-regret}.

\begin{proof}
    Let $X_{t}=\theta_{t}^{\top}y_{t}-\theta_{t}^{\top}x_{t}$, then
    $\duoE_{t}[X_{t}]=\duoE_{t}[\theta_{t}^{\top}y_{t}-\theta_{t}^{\top}x_{t}]=0$,
    $X_{t}=\theta_{t}^{\top}y_{t}-\theta_{t}^{\top}x_{t}\leq \|\theta_{t}\|\| y_{t}-x_{t}\|\leq GD$ and
    \begin{eqnarray*}
        \duoE_{t}[X_{t}^{2}]&=&\duoE_{t}[(\theta_{t}^{\top}y_{t}-\theta_{t}^{\top}x_{t})^{2}]\\
        &=&\duoE_{t}[(\theta_{t}^{\top}y_{t})^{2}+(\theta_{t}^{\top}x_{t})^{2}-2\theta_{t}^{\top}y_{t}\theta_{t}^{\top}x_{t}]\\
        &=&\duoE_{t}[(\theta_{t}^{\top}y_{t})^{2}]+\duoE_{t}[(\theta_{t}^{\top}x_{t})^{2}]-\duoE_{t}[2\theta_{t}^{\top}y_{t}\theta_{t}^{\top}x_{t}]\\
        &=&\duoE_{t}[(\theta_{t}^{\top}y_{t})^{2}]-\theta_{t}^{\top}x_{t}\theta_{t}^{\top}x_{t}\\
        &\leq&(1+\epsilon)^{2}.
    \end{eqnarray*}
    Then,
    \begin{eqnarray*}
            \sum_{t=1}^{T}f_{t}(y_{t})-\sum_{t=1}^{T}f_{t}(x^{*})&=
            &\sum_{t=1}^{T}[\theta_{t}^\top y_{t}+\sigma_{t}(y_{t})]-\sum_{t=1}^{T}[\theta_{t}^\top x^{*}+\sigma_{t}(x^{*})]\\ 
            &             \leq & \sum_{t=1}^{T}\theta_{t}^\top y_{t}-\sum_{t=1}^{T}\theta_{t}^\top x^{*}+\sum_{t=1}^{T}\sigma_{t}(y_{t})-\sum_{t=1}^{T}\sigma_{t}(x^{*})\\ 
            &             \leq & \sum_{t=1}^{T}\theta_{t}^\top y_{t}-\sum_{t=1}^{T}\theta_{t}^\top x^{*}+2T\epsilon\\
            &=&\sum_{t=1}^{T}\theta_{t}^\top y_{t}-\sum_{t=1}^{T}\theta_{t}^\top x_{t}+\sum_{t=1}^{T}\theta_{t}^\top x_{t}-\sum_{t=1}^{T}\theta_{t}^\top x^{*}+2T\epsilon.
    \end{eqnarray*}    
    From Lemma~\ref{lemm:bound-regret}, we know
    \begin{equation}
        \sum_{t=1}^{T}\theta_{t}^\top x_{t}-\sum_{t=1}^{T}\theta_{t}^\top x^{*}
        \leq 4\eta d^{2}T+\frac{\nu \ln (\frac{1}{\delta})}{\eta}+ 2dT(\nu +2\sqrt{\nu })(\frac{1-\delta}{\delta})\epsilon +\delta DGT,
    \end{equation}
    where $\eta=\frac{\sqrt{\nu \ln T }}{2d\sqrt{T}}$. 
    Then by Lemma~\ref{lemm:martingale},  
    \begin{equation}
        \duoP(\sum_{t=1}^{T}(\theta_{t}^{\top}y_{t}-\theta_{t}^{\top}x_{t})
        \leq C(\sqrt{8V \ln(C/\gamma)}+2B^{*}\ln(C/\gamma)))\geq 1-\gamma,
    \end{equation}    
where $V=(1+\epsilon)^{2}T, B^{*}=b=GD$, and $C=\lceil \ln  GD \rceil\lceil \ln ((GD)^{2}T) \rceil$.
    Combine everything to conclude the proof.
\end{proof}

\subsection{Application to black-box optimization}
From online to offline transformation, the result of this paper can also apply to black-box optimization for $\epsilon$-approximately linear functions.
This problem is important in that
previous theoretical analyses for black-box optimization can only deal with linear/convex/smooth objectives in the adversarial environments (via bandit convex optimization). So, it is quite meaningful to clarify the possibility of the black-box optimization problems without such restrictions.  In fact, our objective is not linear, nor smooth, even with a simple assumption.
 
Let $\hat{x}$ be the output of Algorithm 1.
Then, it follows from Theorem 1 that $f(\hat{x})-\min_{x\in \mathcal{K}}f(x)\leq \frac{4d\sqrt{\nu \ln \frac{1}{\delta}}}{\sqrt{T}} +\delta GD + d(\nu +2\sqrt{\nu })(\frac{1-\delta}{\delta})\epsilon + 2\epsilon$.
Additionally, we provide a lower bound $2\epsilon$ for this problem (see Lemma~\ref{lemm:blackbox-lower}). We can see that the difference between the lower bound and the upper bound is $\delta GD + d(\nu +2\sqrt{\nu })(\frac{1-\delta}{\delta})\epsilon$ as 
$T$ approaches infinity. 
This suggests the potential existence of "easier" settings between the adversarial environment and the standard stochastic environment, where better algorithms might be found. It also motivates us to explore these settings further.

\section{Lower bound}
In this section, 
we show a lower bound of the regret.
To do so, we consider a black-box optimization problem for 
the set $\Fset$ of $\epsilon$-approximately linear functions $f:\Kset \to \Real$.
In the problem, we are given access to the oracle $O_f$ for some $f\in \Fset$, 
which returns the value $f(x)$ given an input $x\in \Kset$. 
The goal is to find a point $\hat{x} \in \Kset$ such that 
$f(\hat{x}) -\min_{x\in \Kset} f(x)$ is small enough.
Then, the following statement holds.
 
\begin{lemma}
\label{lemm:blackbox-lower}
For any algorithm $\mathcal{A}$ for the black-box optimization problem for $\Fset$,
there exists an $\epsilon$-approximately linear function $f \in \Fset$ such that 
the output $\hat{x}$ of $\mathcal{A}$ satisfies 
\begin{equation}
f(\hat{x})-\min_{x\in\mathcal{K}}f(x) \geq 2\epsilon.
\end{equation}
\end{lemma}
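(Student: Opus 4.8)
The plan is to prove this oracle lower bound by an indistinguishability (information-theoretic) argument that hides the location of the minimizer from the oracle. The first step is to restrict the adversary to the subfamily of $\Fset$ with $\theta_f = \vzero$: any function $f$ satisfying $|f(x)| \le \epsilon$ for all $x \in \Kset$ is $\epsilon$-approximately linear with witness $\theta_f = \vzero$ and $\sigma_f = f$, has $\|\theta_f\| = 0 \le G$, and obeys $|f(x)| \le \epsilon < 1$. Hence every such $f$ is a legitimate instance that the adversary may present.

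Next I would construct a family of \emph{needle-in-a-haystack} instances. For each $p \in \Kset$, define $f_p(x) = \epsilon$ for $x \ne p$ and $f_p(p) = -\epsilon$. By the previous paragraph each $f_p$ lies in $\Fset$, and $\min_{x \in \Kset} f_p(x) = -\epsilon$, attained only at the hidden point $p$, while $f_p(x) = \epsilon$ everywhere else. The crucial observation is that the oracle $O_{f_p}$ returns the value $\epsilon$ on every input $x \ne p$; consequently, as long as the algorithm never queries $p$, the entire sequence of oracle answers is the constant $\epsilon$ and is identical for all choices of $p$, carrying no information about where the minimum is located.

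The core of the argument is then a diagonalization against $\mathcal{A}$. Running $\mathcal{A}$ against an oracle that answers $\epsilon$ to every query produces a (countable) set $Q$ of queried points and a final output $\hat{x}$; since $\Kset$ contains the unit ball it is uncountable, so there exists $p^{\star} \in \Kset \setminus (Q \cup \{\hat{x}\})$. I would then run $\mathcal{A}$ against $O_{f_{p^{\star}}}$: because $p^{\star}$ is never queried, every oracle answer equals $\epsilon$, so $\mathcal{A}$ reproduces exactly the same queries and the same output $\hat{x}$. Therefore $f_{p^{\star}}(\hat{x}) = \epsilon$ (as $\hat{x} \ne p^{\star}$) while $\min_{x \in \Kset} f_{p^{\star}}(x) = -\epsilon$, giving $f_{p^{\star}}(\hat{x}) - \min_{x \in \Kset} f_{p^{\star}}(x) = 2\epsilon$, which is the claimed bound.

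The main obstacle I anticipate is handling the adaptive (and possibly randomized) nature of $\mathcal{A}$ rigorously rather than the arithmetic, which is trivial. For an adaptive deterministic algorithm one must argue that the run against $O_{f_{p^{\star}}}$ coincides step by step with the run under constant-$\epsilon$ answers; this follows by induction on the query index, since identical answers force identical subsequent queries, provided $p^{\star}$ avoids the realized query set together with the output. For a randomized $\mathcal{A}$ the same construction applies after conditioning on its internal randomness, yielding a bad $f$ for each fixed random seed; the only care needed is that the relevant set of queried points and outputs remains countable, so that a hidden point $p^{\star}$ is always available.
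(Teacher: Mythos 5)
Your construction is the same needle-in-a-haystack argument the paper uses: feed the algorithm the constant answer $\epsilon$, pick a hidden point $z$ outside the realized query set and the output, and define $f$ to be $\epsilon$ everywhere except $-\epsilon$ at $z$, which is $\epsilon$-approximately linear with $\theta_f=\vzero$. For deterministic $\mathcal{A}$ your argument (including the step-by-step coincidence of the two runs, which the paper leaves implicit) is correct and matches the paper's proof.

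There is, however, a genuine gap in your treatment of randomized algorithms. You propose to condition on the internal randomness and produce ``a bad $f$ for each fixed random seed,'' but the lemma requires a \emph{single} $f\in\Fset$, chosen before the algorithm's coins are tossed, against which the output $\hat{x}$ fails. A seed-dependent hidden point $p^{\star}$ does not give this: the union over all seeds of the query sets and outputs need not be countable (it can cover all of $\Kset$), so your countability escape hatch does not apply across seeds. The paper closes this by a measure argument rather than a cardinality one: with the feedbacks fixed at $\epsilon$, the random set $X=\{x_{1},\dots,x_{T},\hat{x}\}$ is finite, so for $z'$ drawn uniformly from $\Kset$ one has $\duoE_{z'}[P_{X}(z'\in X\mid z')]=P_{z',X}(z'\in X)=0$, hence there exists a fixed $z$ with $P_{X}(z\in X)=0$; the single function $f$ built from this $z$ then satisfies $f(\hat{x})-\min_{x\in\Kset}f(x)\geq 2\epsilon$ with probability one. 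Replacing your per-seed diagonalization with this probability-zero selection of the hidden point completes the randomized case.
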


\begin{proof}   
    Firstly, suppose that 
    the algorithm $\mathcal{A}$ is deterministic. 
    At iteration $t=1,...,T$, for any feedback $y_{1},...,y_ {t-1}\in\Real$,  
    $\mathcal{A}$ should choose the next query point $x_{t}$ based on the data observed so far. 
    That is, 
    \begin{equation}
        x_{t}=\mathcal{A}((x_{1}, y_{1}),...,(x_{t-1}, y_{t-1})).
    \end{equation}
    Assume that the final output $\hat{x}$ is returned after $T$ queries to the oracle $O_f$. 
    In particular, we fix the $T$ feedbacks $y_1=y_2=\dots=y_T=\epsilon$.
    Let $z\in\mathcal{K}$ be such that $z\notin\{x_{1},...,x_{T},\hat{x}\}$. 
    Then we define a function $f: \mathcal{K}\to\Real$ is as
    \begin{equation}
    \label{lower-bound-equation}
         f(x)= \begin{cases}
            \epsilon,\quad &x\not= z, \\
            -\epsilon,\quad &x=z.
        \end{cases}  
    \end{equation} 
    The function $f$ is indeed an $\epsilon$-approximately linear function, 
    as $f(x)=0^\top x + \sigma(x)$, where $\sigma(x)=\epsilon$ for $x \neq z$ and 
    $\sigma(x)=-\epsilon$ for $x =z$. 
    Further, we have 
    
    \begin{equation}
        f(\hat{x})-\min_{x\in\mathcal{K}} f(x) \geq 2\epsilon. 
    \end{equation}
    
    Secondly, if algorithm $\mathcal{A}$ is randomized. 
    It means each $x_{t}$ is chosen randomly.  
    We assume the same feedbacks $y_1=y_2=\dots=y_T=\epsilon$.
    Let $X=\{x_{1},...,x_{T},\hat{x}\}$.
    Then, there exists a point $z\in\mathcal{K}$ such that $P_{X}(z\in X)=0$, since 
    $\duoE_{z'}[P_{X}(z'\in X|z')]=P_{z',X}(z'\in X)=\duoE_{X}[P_{z'}(z'\in X|X)] =0$, 
    where the expectation on $z'$ is defined w.r.t.\ the uniform distribution over $\Kset$.
    For the objective function $f$ defined in (\ref{lower-bound-equation}), we have
    $f(\hat{x})-\min_{x\in\mathcal{K}} f(x) \geq 2\epsilon$ while $f$ is $\epsilon$-approximately linear.
\end{proof}

\begin{theorem}
For any horizon $T\geq 1$ and any player, there exists an adversary such that 
the regret is at least $2\epsilon T$.
\end{theorem}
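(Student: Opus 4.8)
The plan is to lift the single-query bound of Lemma~\ref{lemm:blackbox-lower} to the $T$-round online setting by reusing the same hard instance in every round. Concretely, I would take the linear part to be trivial, setting $\theta_t = 0$ for all $t$, so that each loss is a pure perturbation with $|f_t(x)| \le \epsilon$, and I would use a single function repeated across all rounds, $f_1 = \cdots = f_T = f$, where $f(x) = \epsilon$ for $x \ne z$ and $f(z) = -\epsilon$ for one distinguished point $z \in \mathcal{K}$. As in the lemma, this $f$ is $\epsilon$-approximately linear (take $\theta_f = 0$ and $\sigma = f$), so committing to it in advance is a legal move for the oblivious adversary; establishing the bound against an oblivious adversary is in fact stronger than required, since the adaptive perturbation allowed by the problem setting can only help.

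With this instance the comparator is immediate: since $f(z) = -\epsilon$ is the global minimum of $f$, we have $\min_{x \in \mathcal{K}} \sum_{t=1}^{T} f_t(x) = \sum_{t=1}^{T} f(z) = -\epsilon T$. The player's cumulative loss is $\sum_{t=1}^{T} f(x_t)$, which equals $\epsilon T$ provided the player never queries $z$, i.e. $z \notin \{x_1, \ldots, x_T\}$. Subtracting the two then yields regret $\epsilon T - (-\epsilon T) = 2\epsilon T$, which is the claimed bound.

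The only substantive work is to guarantee the existence of a single $z$ that the player avoids, and this is precisely the argument already carried out in Lemma~\ref{lemm:blackbox-lower}. For a deterministic player the sequence $x_1, \ldots, x_T$ is determined once the adversary's instance is fixed; being a finite set, any $z$ outside it works. For a randomized player I would repeat the Fubini-type computation from the lemma: writing $X = \{x_1, \ldots, x_T\}$ and drawing $z'$ uniformly over $\mathcal{K}$, one has $\duoE_{z'}[P_X(z' \in X \mid z')] = P_{z',X}(z' \in X) = \duoE_X[P_{z'}(z' \in X \mid X)] = 0$, since a finite set has measure zero in $\mathcal{K}$. Hence there is a fixed $z$ with $P_X(z \in X) = 0$; the oblivious adversary commits to the instance determined by this $z$ before play begins, and then almost surely $f(x_t) = \epsilon$ for every $t$, so the regret equals $2\epsilon T$ almost surely.

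I expect the main obstacle to be the bookkeeping around the adversary model rather than any hard estimate: the distinguished point $z$ must be committed to in advance, yet still be avoided by the (possibly random) realized plays, and the function value at each played point must be consistent with the globally defined $f$. Both concerns are dissolved by the measure-zero argument above, which fixes $z$ using only the player's strategy and decouples it from the realized actions; once that is in place the regret is a one-line subtraction. Since the construction applies for every horizon $T \ge 1$ and against every player, the stated $\Omega(\epsilon T)$ lower bound follows.
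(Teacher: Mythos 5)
Your proof is correct, and it reaches the bound by a genuinely different (more direct) route than the paper. The paper keeps Lemma~\ref{lemm:blackbox-lower} as a black box and proves the theorem by contradiction via an online-to-batch reduction: if some player had regret below $2\epsilon T$, averaging its iterates would yield a black-box optimizer with suboptimality below $2\epsilon$, contradicting the lemma. You instead inline the lemma's construction into the online game itself: the same hard function $f$ (constant $\epsilon$ except at one point $z$ where it is $-\epsilon$), the same all-$\epsilon$ feedback trick to pin down the player's trajectory, and the same Fubini-type measure-zero argument to find a $z$ the player almost surely never visits. What your version buys is a slightly stronger and more transparent conclusion --- the regret equals $2\epsilon T$ almost surely for an explicitly exhibited oblivious adversary, with no contradiction framing and no detour through the averaged iterate --- while the paper's modular route has the advantage of reusing Lemma~\ref{lemm:blackbox-lower}, which it needs anyway for the black-box application in Section~5.5. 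One point you gloss over but which your argument implicitly handles correctly (and which the paper is equally terse about): the set $X$ in the measure-zero computation must be the trajectory generated under the hypothetical all-$\epsilon$ feedback sequence, after which consistency follows because a player that never queries $z$ really does receive feedback $\epsilon$ at every round; spelling that out would make the argument airtight.
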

\begin{proof}
We prove the statement by contradiction. 
Suppose that there exists a player whose regret is 
less than $2\epsilon T$. 
Then we can construct an algorithm for the black-box optimization problem from it by 
feeding the online algorithm with $T$ feedbacks of the black-box optimization problem and 
by setting $\hat{x}=\min_{t \in [T]}f(x_t)$. Then, 
\[
f(\hat{x}) - \min_{x \in \Kset}f(x) \leq \frac{\sum_{t=1}^T f(x_t) -\sum_{t=1}^T\min_{x\in \Kset}f(x) }{T} < 2\epsilon, 
\]
which contradicts Lemma~\ref{lemm:blackbox-lower}.
\end{proof}

This lower bound indicates that $\Omega(\epsilon T)$ regret is inevitable 
for the bandit optimization problem for $\epsilon$-approximately linear functions. 
We conjecture that the lower bound can be tightened to $\Omega(d\epsilon T)$, but we leave it as an open problem.

\section{Experiments}
\begin{figure}[h]
\includegraphics[width=0.8\textwidth]{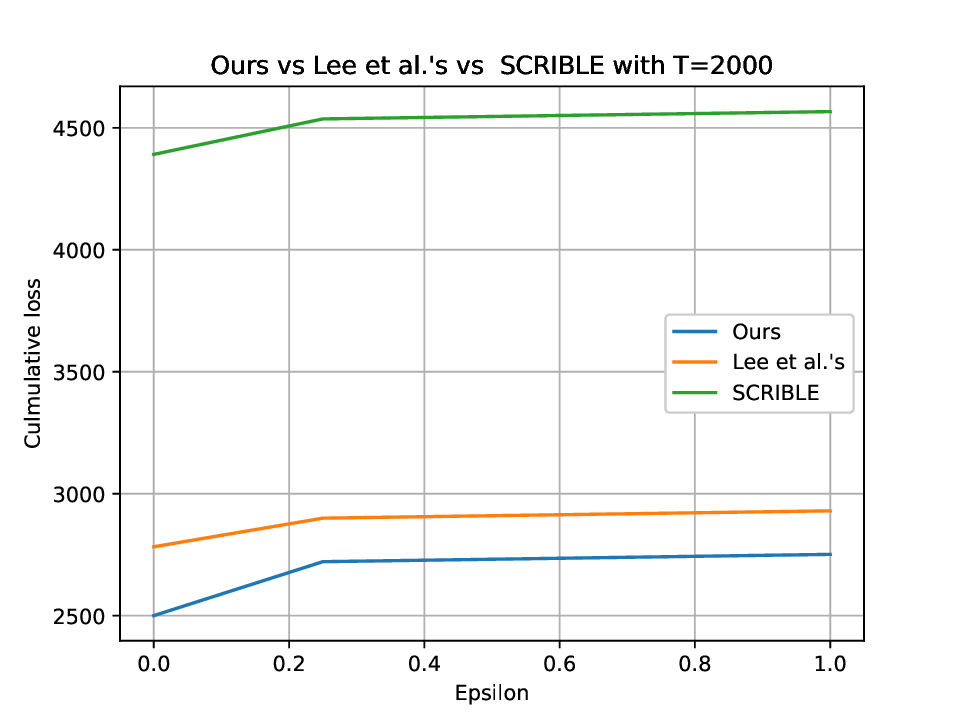}
\caption{Average cumulative loss of algorithms for artificial data sets. The blue line corresponds to the results of our algorithm, the yellow line represents the algorithm in~\cite{lee2020bias}, and the green line corresponds to the SCRiBLe algorithm~\cite{abernethy2008competing}.} \label{fig01}
\end{figure}
We conduct comparative experiments across a range of 
$\epsilon$ values: 0, 0.25, 0.5, 0.75, and 1, evaluating Algorithm 1, SCRiBLe~\cite{abernethy2008competing}, and SCRiBLe with lifting and increasing learning rates~\cite{lee2020bias}. 
We generate the artificial data as follows.
The input set is constructed such that each element $x \in \mathbb{R}^d$ satisfies $\|x\|_2 < D$. 
The loss vectors $\theta_{1},...,\theta_{T}$ are 
$d$-dimensional vectors randomly generated before the experiment, each with a Euclidean norm no greater than $G$.
The $\nu$-normal barrier is defined as 
$\mathcal{R}(x,b)=400 \cdot \left( -\log\left(1 - \frac{\|x\|^2}{b^2 D^2} \right) - 2\nu \log b \right)$ and $\eta$ is defined as $\frac{20 \sqrt{ \log \frac{1}{\delta}}}{4 d \sqrt{T}}$.
In our experiments, we set $D=5$, $G=1$, $d=5$, $\nu=1$, respectively.
The perturbation is defined as 
$\sigma(y_{t})=\epsilon\sin{((y_{t}^{\top}l)\pi)}$, where $l$ is a random vector in $\Real^{d}$ (Note that this is not the worst-case perturbation, which may explain why the results do not increase significantly as $\epsilon$ grows). 
Each algorithm is run with a fixed time horizon of $T=2000$. For each value of $\epsilon$, we repeat the experiment 10 times for each algorithm and compute the average cumulative loss.
The x-axis represents the value of 
$\epsilon$, and the y-axis shows the average cumulative loss.
As shown in the figure above, our method generally outperforms SCRiBLe. Compared to Lee et al.’s method, our approach achieves slightly better performance. This indicates that, when facing oblivious adversaries, not using increasing learning rates (i.e., dynamically adjusting the value of 
$\eta$) has little to no impact on the results(see
Fig.~\ref{fig01}).

\section{Conclusion}

In this work, we study bandit optimization with non-convex, non-smooth losses, where each loss consists of a linear term plus a small adversarial perturbation revealed after the action. We propose a new high-probability regret analysis and establish matching upper and lower bounds. Future work includes extending our results to broader loss classes, such as convex losses with perturbations.

\begin{credits}
\subsubsection{\ackname}
This work was supported by WISE program (MEXT) at Kyushu
University,  JSPS KEKENHI Grant Numbers JP23K24905,
JP23K28038, and JP24H00685, respectively.

\subsubsection{\discintname}
The authors have no competing interests to declare that are
relevant to the content of this article.
\end{credits}

%
%
%

\bibliography{main, hatano}
\bibliographystyle{splncs04}
 
\appendix
\section{Appendix}
Before prove the Lemma~\ref{lemm:base-normal-barrier}, we introduce the follow Lemma first.
\begin{lemma}
[\cite{nemirovski2004interior}]
    \label{lemm:Dikins ellipsoids}
        For $x\in int(\mathcal{K})$ and $h\in \Real^d$ let
     $p_x(h) = inf\{r \geq0|x \pm r^{-1}h\in \mathcal{K}\}$,
     One has
     $p_x(h)<=|h|_x \leq
    (\nu+2\sqrt{\nu} )p_x(h)$.
\end{lemma}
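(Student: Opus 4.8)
The statement packages two inequalities, and I would prove them separately. The left inequality $p_x(h)\le \| h \|_{x}$ is immediate from the Dikin-ellipsoid containment already recorded in Lemma~\ref{lemm:normal-property}: since $\{y:\| y-x\|_{x}\le 1\}\subseteq\mathcal{K}$ and the two points $x\pm h/\| h\|_{x}$ both lie on the boundary of this ellipsoid, they belong to $\mathcal{K}$, so $r=\| h\|_{x}$ is feasible in the infimum defining $p_x(h)$, giving $p_x(h)\le\| h\|_{x}$. The real content is the right inequality $\| h\|_{x}\le(\nu+2\sqrt{\nu})\,p_x(h)$, which is where the barrier parameter $\nu$ must genuinely enter.

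For the upper bound I would first normalize. Both sides are positively homogeneous of degree one in $h$, so it suffices to treat the case $p_x(h)=1$, i.e.\ $x\pm h\in\mathcal{K}$, and to show $\| h\|_{x}\le\nu+2\sqrt{\nu}$. Restricting the barrier to the line through $x$ in direction $h$, set $\phi(t)=\mathcal{R}(x+th)$; this is a one-dimensional $\nu$-self-concordant barrier on its maximal interval $(-\beta_-,\beta_+)$, where $\beta_\pm\ge 1$ are the distances to the boundary and the nearer one equals $1/p_x(h)=1$, say $\beta_+=1$. Since $\phi''(0)=\| h\|_{x}^{2}$, the goal becomes $\sqrt{\phi''(0)}\le\nu+2\sqrt{\nu}$. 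I would assemble two one-dimensional facts. First, the barrier inequality $(\phi'(t))^{2}\le\nu\,\phi''(t)$ — the line restriction of the gradient bound $\|\nabla\mathcal{R}(x)^{\top}h\|\le\sqrt{\nu}\| h\|_{x}$ in the definition — integrated toward the near boundary, where $\phi'(t)\to+\infty$, yields $\phi'(t)\le \nu/(\beta_+-t)$ for $t$ past the minimizer of $\phi$, and symmetrically $\phi'(0)\ge -\nu/\beta_-\ge-\nu$. Second, the third-derivative bound in the definition gives $|\tfrac{d}{dt}(\phi''(t))^{-1/2}|\le 1$, whence $\phi''(s)\ge(\phi''(0)^{-1/2}+s)^{-2}$ and therefore $\phi'(t)\ge\phi'(0)+\phi''(0)^{-1/2}-(\phi''(0)^{-1/2}+t)^{-1}$.

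Finally I would confront the two estimates: for a suitable $t\in(0,\beta_+)$ the self-concordance lower bound on $\phi'(t)$ cannot exceed the barrier upper bound $\nu/(\beta_+-t)$, and substituting $\phi'(0)\ge-\nu$ turns this into a single inequality in the unknown $\psi_0=\phi''(0)^{-1/2}$; solving it and optimizing over $t$ forces $\psi_0\ge 1/(\nu+2\sqrt{\nu})$, i.e.\ $\| h\|_{x}\le\nu+2\sqrt{\nu}$. The main obstacle is exactly this last step. Self-concordance alone only lower-bounds the local curvature (it merely reproduces the Dikin direction), so the bound relies essentially on the barrier inequality to prevent $\phi''(0)$ from blowing up while the boundary stays at distance $1$, and carrying the constant through the optimization to the clean value $\nu+2\sqrt{\nu}$ is the delicate part. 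I would also handle the bookkeeping of edge cases — which boundary is nearer, the sign of $\phi'(0)$ relative to the minimizer, and the justification that $\phi'\to+\infty$ at the finite near boundary — but these are routine once the two inequalities above are in place.
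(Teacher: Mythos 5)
First, a point of reference: the paper does not prove this lemma at all --- it is imported verbatim from \cite{nemirovski2004interior} and used as a black box in the proof of Lemma~\ref{lemm:control-h}. So there is no in-paper proof to compare against; your attempt has to be judged on its own. Your left inequality is correct and is the standard argument: $x\pm h/\|h\|_x$ lie in the Dikin ellipsoid, hence in $\mathcal{K}$, so $r=\|h\|_x$ is feasible in the infimum. Your overall plan for the right inequality (restrict to a line, play the self-concordance lower bound on $\phi''$ against the barrier differential inequality $(\phi')^2\le\nu\phi''$) is also the textbook route.

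However, the specific combination you commit to in the last step does not close, and this is a genuine gap rather than bookkeeping. You propose to substitute $\phi'(0)\ge-\nu$ (from the far boundary) into $\phi'(t)\ge\phi'(0)+\tfrac{tQ^2}{1+tQ}$ (where $Q=\|h\|_x=\sqrt{\phi''(0)}$) and confront it with $\phi'(t)\le\nu/(1-t)$, claiming the resulting inequality forces $Q\le\nu+2\sqrt{\nu}$. It does not. The constraint you actually obtain is $\tfrac{tQ^2}{1+tQ}\le\nu+\tfrac{\nu}{1-t}$ for all $t\in(0,1)$, and this is satisfiable with $Q$ strictly larger than $\nu+2\sqrt{\nu}$: for $\nu=1$ the target is $Q\le3$, yet $Q=4$ satisfies $\tfrac{16t}{1+4t}\le1+\tfrac{1}{1-t}$ for every $t\in(0,1)$ (the left side never exceeds about $2.7$ on the range where the right side is below $3$). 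Optimizing your inequality only yields a bound of the order $2\nu+2\sqrt{2\nu}$. The loss comes precisely from paying the $-\nu$ for $\phi'(0)$.

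The missing idea is to use the symmetry built into $p_x$: since $p_x(h)=p_x(-h)$ and $\|h\|_x=\|-h\|_x$, you may replace $h$ by $-h$ so that $\phi'(0)=\nabla\mathcal{R}(x)^{\top}h\ge0$, while still having the boundary at distance $\beta_+\ge1$ in the $+h$ direction. Then the two estimates give, for $t=s/Q$ with $0<s<Q$,
\begin{equation*}
\frac{sQ}{1+s}\;\le\;\phi'\!\left(\tfrac{s}{Q}\right)\;\le\;\frac{\nu}{1-s/Q}=\frac{\nu Q}{Q-s},
\end{equation*}
hence $Q\le s+\nu+\nu/s$, and choosing $s=\sqrt{\nu}$ yields exactly $Q\le\nu+2\sqrt{\nu}$ (if $Q\le\sqrt{\nu}$ there is nothing to prove). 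With that one change your argument recovers the cited constant; the remaining items you flag (blow-up of $\phi'$ at the finite boundary, the typo $\phi''(0)^{-1/2}$ where $\phi''(0)^{1/2}$ is meant in the integrated lower bound) are indeed routine.
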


\subsection{Proof of Lemma 4}
\begin{proof}
    From Lemma~\ref{lemm:Dikins ellipsoids}, for $\forall x,y\in\mathcal{K}_{\delta}$, we have 
    \begin{equation}
        \| x-y \|_{x} \leq (\nu+2\sqrt{\nu})p_x(x-y).
    \end{equation}
    For $p_x(x-y) = inf\{r \geq0 | x \pm r^{-1}(x-y)\in \mathcal{K}\}$, it suffices to show $\| x \pm r^{-1}(x-y) \| \leq D$.
    \begin{equation}
        \| x \pm r^{-1}(x-y) \| \leq \| x \| + \| r^{-1}(x-y) \|\leq (1-\delta)D+r^{-1}2(1-\delta)D
    \end{equation}
    Then, $r = 2\frac{(1-\delta)}{\delta}$ and $\| x-y \|_{x} \leq 2(\nu+2\sqrt{\nu})\frac{(1-\delta)}{\delta}$
\end{proof}

\subsection{Proof of Lemma 5}
\begin{proof}
    Recall that $x'_{t+1}=\arg\min_{x'\in\mathcal{K'}}\phi_{t}(x')$, where $\phi_{t}(x')=\eta\sum_{\tau=1}^{t}g_{\tau}^{\top}x'+\mathcal{R}(x')$.
    Let $h_{t}(x)=\phi_{t}((x, 1))=\phi_{t}(x')$, then $\min h_{t}(x)=\min\phi_{t}(x')$. Noticing that $h_{t}$ is a convex function on $\Real^{d}$ and still holds the barrier property(approaches infinity along any sequence
    of points approaching the boundary of $\mathcal{K}$).
    By properties of convex functions, we can get
    $\nabla h_{t-1}(x_{t})=0$ and for the first $d$ coordinates $\nabla \phi_{t-1}(x'_{t})=0$.
    
    Consider any point in $z\in W_{\frac{1}{2}}(x'_{t})$. It can be written as
    $z=x'_{t}+\alpha u$ for some vector $u$ such that $\| u \|_{x'_{t}}=1$ and $\alpha\in(-\frac{1}{2},\frac{1}{2})$. Noticing the $d+1$ coordinate of $u$ is 0. 
    Expanding,
    \begin{eqnarray*}
         \phi_{t}(z)&=&\phi_{t}(x'_{t}+\alpha u)\\
         &=&\phi_{t}(x'_{t})+\alpha\nabla\phi_{t}(x'_{t})^{\top}u+\alpha^{2}\frac{1}{2}u^{\top}\nabla^{2}\phi_{t}(\xi)u\\
         &=&\phi_{t}(x'_{t})+\alpha(\nabla\phi_{t-1}(x'_{t})+\eta g_{t})^{\top}u+\alpha^{2}\frac{1}{2}u^{\top}\nabla^{2}\phi_{t}(\xi)u\\
         &=&\phi_{t}(x'_{t})+\alpha\eta g_{t}^{\top}u+\alpha^{2}\frac{1}{2}u^{\top}\nabla^{2}\phi_{t}(\xi)u,
    \end{eqnarray*}
    for some $\xi$ on the path between $x'_{t}$ and $x'_{t}+\alpha u$ and the last equality holds because $\nabla\phi_{t-1}(x'_{t})^{\top}u=0$.
    Setting the derivative with respect to $\alpha$ to zero, we obtain 
    \begin{equation}
        \| \alpha^{*}\|=\frac{\eta \| g_{t}^{\top}u\|}{u^{\top}\nabla^{2}\phi_{t}(\xi)u}=\frac{\eta \| g_{t}^{\top}u\|}{u^{\top}\nabla^{2}\mathcal{R}(\xi)u}
    \end{equation}
    The fact that $\xi$ is on the line $x'_{t}$ to $x'_{t}+\alpha u$ implies that $\| \xi-x'_{t}\|_{x'_{t}}\leq \| \alpha u\|_{x'_{t}}\leq \frac{1}{2}$. Hence, by Lemma~\ref{lemm:normal-property}
    \begin{equation}
        \nabla^{2}\mathcal{R}(\xi)\succeq (1-\| \xi-x'_{t}\|_{x'_{t}})^{2}\nabla^{2}\mathcal{R}(x'_{t})\succ \frac{1}{4}\nabla^{2}\mathcal{R}(x'_{t}).
    \end{equation}
    Thus $u^{\top}\nabla^{2}\mathcal{R}(\xi)u>\frac{1}{4}\| u \|_{x'_{t}}=\frac{1}{4}$, and $\alpha^{*}<4\eta\| g_{t}^{\top}u\|$.
    Using assumption $\max_{x\in\mathcal{K}} \| f_{t}(x) \| \leq 1$,
    \begin{equation}
        g_{t}^{\top}u\leq  \| g_{t}\|^{*}_{x'_{t}}\| u \|_{x'_{t}}\leq\| df_{t}(y_{t})\mathbf{A}_{t}^{-1}\mu_{t}\|^{*}_{x_{t}'}\leq\sqrt{d^{2}\mu_{t}^{\top}\mathbf{A}_{t}^{-\top}(\nabla^{2}\mathcal{R}(x_{t}'))^{-1}\mathbf{A}_{t}^{-1}\mu_{t}}\leq d, \label{eq:12}
    \end{equation}
    we conclude that $\| g_{t}^{\top}u\|\leq d$, and $\| \alpha^{*}\|<4d\eta<\frac{1}{2}$ by our choice of $\eta$ and $T$.  We conclude that the local optimum $\arg\min z\in W_{\frac{1}{2}(x'_{t})}\phi_{t}(z)$ is strictly inside $W_{4d\eta}(x'_{t})$,
    and since $\phi_{t}$ is convex, the global optimum is
    \begin{equation}
        x_{t+1}=\arg\min_{z\in\mathcal{K'}}\phi_{t}(z)\in W_{4d\eta}(x'_{t}).
    \end{equation}
\end{proof}
\subsection{Proof of Lemma 8}
\begin{proof}
    Recall for any $\delta \in (0,1)$, 
    $\mathcal{K}_{\delta}=\{x|\frac{1}{1-\delta} x \in \mathcal{K}\}$ and 
    $\mathcal{K'_{\delta}}=\{(x,1):x\in\mathcal{K_\delta}\}$.
    Let $x_{\delta}^{*}=\Pi_{\mathcal{K}_{\delta}}x^{*}$, by properties of projections, then
    \begin{eqnarray}
        \| x^{*}-x_{\delta}^{*}\|=\min_{a\in\mathcal{K_{\delta}}}\| x^{*}-a\|.
    \end{eqnarray}
    Since $(1-\delta)x^{*}\in\mathcal{K_{\delta}}$, then
    \begin{equation}
       \min_{a\in\mathcal{K_{\delta}}}\| x^{*}-a\|
        \leq \| x^{*}-(1-\delta)x^{*}\|
        \leq \delta D.
    \end{equation}
    \begin{equation}
        \| x_{\delta}^{*}-x^{*}\|\leq\delta D.
    \end{equation}

    By Cauchy–Schwarz inequality and the fact that $\| \theta_{t} \|\leq G$ and $\| x_{\delta}^{*}-x^{*}\|\leq\delta D$,
    \begin{equation}
        \sum_{t=1}^{T}\theta_{t}^\top x_{\delta}^{*}-\sum_{t=1}^{T}\theta_{t}^\top x^{*}\leq\delta DGT.
    \end{equation}
    Then,
     \begin{eqnarray*}
       \sum_{t=1}^{T}\theta_{t}^\top x_{t}-\sum_{t=1}^{T}\theta_{t}^\top x^{*}
        &=&\sum_{t=1}^{T}\theta_{t}^\top x_{t}-\sum_{t=1}^{T}\theta_{t}^\top x_{\delta}^{*}+\sum_{t=1}^{T}\theta_{t}^\top x_{\delta}^{*}-\sum_{t=1}^{T}\theta_{t}^\top x^{*}\\
        &\leq& \sum_{t=1}^{T}\theta_{t}^\top x_{t}-\sum_{t=1}^{T}\theta_{t}^\top x_{\delta}^{*}+\delta DGT.
    \end{eqnarray*}

    Let $\theta_{t}'=(\theta_{t}, z)$, where $z$ is the $(d+1)$th coordinate of $d\duoE_{t}[(\theta_{t}, 0)^{\top}(x_{t}'+\mathbf{A}_{t}\mu_{t})\mathbf{A}_{t}^{-1}\mu_{t}]$.
    From Lemma~\ref{lemm:gradient-estimate}, we know $d\duoE_{t}[(\theta_{t}, 0)^{\top}(x_{t}'+\mathbf{A}_{t}\mu_{t})\mathbf{A}_{t}^{-1}\mu_{t}]=\theta_{t}'$. 
    Since $g_{t}=df(y_{t})\mathbf{A}_{t}^{-1}\mu_{t}=d(\theta_{t},0)^{\top}(x_{t}'+\mathbf{A}_{t}\mu_{t})\mathbf{A}_{t}^{-1}\mu_{t}+d\sigma_{t}(y_{t})\mathbf{A}_{t}^{-1}\mu_{t}$, and letting $M_{t}=\duoE_{t}[d\sigma_{t}(y_{t})\mathbf{A}_{t}^{-1}\mu_{t}]$, it follows that $\theta_{t}'=\duoE_{t}[g_{t}]-M_{t}$. Consequently,
        \begin{eqnarray*}
             \sum_{t=1}^{T}\theta_{t}^\top x_{t}-\sum_{t=1}^{T}\theta_{t}^\top x^{*}_{\delta}
            &=&\sum_{t=1}^{T}(\theta_{t}'^\top x_{t}'-z)-\sum_{t=1}^{T}(\theta_{t}'^\top x^{*'}_{\delta}-z)\\
            &=&\sum_{t=1}^{T}(\duoE_{t}[g_{t}]-M_{t})^{\top}x_{t}'-\sum_{t=1}^{T}(\duoE_{t}[g_{t}]-M_{t})^{\top}x_{\delta}^{*'}\\
            &=&\sum_{t=1}^{T}\duoE_{t}[g_{t}]^{\top}(x_{t}'-x_{\delta}^{*'})+\sum_{t=1}^{T}M_{t}^{\top}(x_{\delta}^{*'}-x_{t}').
        \end{eqnarray*}

    We bound $\sum_{t=1}^{T}M_{t}^{\top}(x_{\delta}^{*'}-x_{t}')$ firstly.
    By Cauchy–Schwarz inequality,
    \begin{eqnarray*}
            \sum_{t=1}^{T}M_{t}^{\top}(x_{\delta}^{*'}-x_{t}')
            &\leq&\sum_{t=1}^{T}\| M_{t} \|_{x_{t}'}^{*} \| x_{\delta}^{*'}-x_{t}' \|_{x_{t}'} 
            .
    \end{eqnarray*}
By Jensen's inequality,
    \begin{eqnarray}
            \| M_{t} \|_{x_{t}'}^{*} &=&
            \sqrt{M_{t}^{\top}\nabla^{2}(\mathcal{R}(x_{t}'))^{-1}M_{t}}\label{eq:boundM1}\\
            &=&\sqrt{\duoE_{t}[d\sigma_{t}(y_{t})\mathbf{A}_{t}^{-1}\mu_{t}]^{\top}\nabla^{2}(\mathcal{R}(x_{t}'))^{-1}\duoE_{t}[d\sigma_{t}(y_{t})\mathbf{A}_{t}^{-1}\mu_{t}]}\\
            &=&\sqrt{d^{2} \duoE_{t}[\sigma_{t}(y_{t})\mu_{t}]^{\top}\mathbf{A}_{t}^{-1}\mathbf{A}_{t}^{2}\mathbf{A}_{t}^{-1}\duoE_{t}[\sigma_{t}(y_{t})\mu_{t}]}
    \\ &=& \sqrt{d^{2} \duoE_{t}[\sigma_{t}(y_{t})\mu_{t}]^{\top}\duoE_{t}[\sigma_{t}(y_{t})\mu_{t}]}\\
    &=& \sqrt{d^{2} \| \duoE_{t}[\sigma_{t}(y_{t})\mu_{t}]\|^2}\\
            &\leq&\sqrt{d^{2}\duoE_{t}[\|\sigma_{t}^{2}(y_{t})\mu_{t}^{\top}\mu_{t}\|]}\\
            &\leq&\sqrt{d^{2}\epsilon^{2}}\\
            &=&d\epsilon. \label{eq:boundM2}
    \end{eqnarray}
    
    Then we bound $\| x_{\delta}^{*'}-x_{t}' \|_{x_{t}'}$. By Lemma~\ref{lemm:control-h},
    \begin{eqnarray*}
            \| x_{\delta}^{*'}-x_{t}' \|_{x_{t}'} &\leq& 2(\nu+2\sqrt{\nu})\frac{(1-\delta)}{\delta}.
    \end{eqnarray*}

So $\sum_{t=1}^{T}M_{t}^{\top}(x_{\delta}^{*'}-x_{t}')\leq Td\epsilon 2(\nu+2\sqrt{\nu})\frac{(1-\delta)}{\delta}$.
Then bound $\sum_{t=1}^{T}\duoE_{t}[g_{t}]^T(x_{t}'-x_{\delta}^{*'})$

By Lemma~\ref{lemm:FTRL},
    \begin{eqnarray*}
            \sum_{t=1}^{T}\duoE_{t}[g_{t}]^T(x_{t}'-x_{\delta}^{*'})
            &=&\duoE_{t}\{\sum_{t=1}^{T}g_{t}^{\top}(x_{t}'-x_{\delta}^{*'})\}\\
            &\leq&\duoE_{t}\{\sum_{t=1}^{T}[g_{t}^{\top}x_{t}'-g_{t}^{\top}x_{t+1}']+\frac{1}{\eta}[\mathcal{R}(x_{\delta}^{*'})-\mathcal{R}(x_{1}')]\}\\
            &\leq&\duoE_{t}\{\sum_{t=1}^{T}[\| g_{t} \|^{*}_{x_{t}'} \| x_{t}'-x_{t+1}' \|_{x_{t}'}]\}+\frac{1}{\eta}(\mathcal{R}(x_{\delta}^{*'})-\mathcal{R}(x_{1}')).
    \end{eqnarray*}

Lemma~\ref{lemm:close-property} implies that $\| x'_{t}-x'_{t+1}\|_{x'_{t}}\leq 4d\eta$ is true by choice of $\eta$. Additionally, from Eq.~(\ref{eq:12}), we deduce that $\| g_{t}\|^{*}_{x_{t}'}\leq d$.
 Therefore,
 \begin{equation}
     \| g_{t} \|^{*}_{x_{t}'} \| x_{t}'-x_{t+1}' \|_{x_{t}'}\leq 4\eta d^{2},
 \end{equation}
 \begin{equation}
     \duoE_{t}\{\sum_{t=1}^{T}[\| g_{t} \|^{*}_{x_{t}'} \| x_{t}'-x_{t+1}' \|_{x_{t}'}]\}
            \leq 4\eta d^{2}T .
 \end{equation}

With Lemma~\ref{lemm:normal-log-property},
    \begin{equation}
        \frac{1}{\eta}(\mathcal{R}(x_{\delta}^{*'})-\mathcal{R}(x_{1}'))
                \leq \frac{\nu \log (\frac{1}{\delta})}{\eta}.
    \end{equation}
Combine everything, we get
    \begin{equation}
         \sum_{t=1}^{T}\theta_{t}^\top x_{t}-\sum_{t=1}^{T}\theta_{t}^\top x^{*}_{\delta}
                \leq 4\eta d^{2}T+\frac{\nu \log (\frac{1}{\delta})}{\eta}+ Td\epsilon2(\nu+2\sqrt{\nu})\frac{(1-\delta)}{\delta}+\delta DGT.
    \end{equation}
\end{proof}

\end{document}